\newtheoremstyle{slplain}
  {.5\baselineskip\@plus.2\baselineskip\@minus.2\baselineskip}
  {.5\baselineskip\@plus.2\baselineskip\@minus.2\baselineskip}
  {\slshape}
  {}
  {\bfseries}
  {.}
  { }
  {}
\theoremstyle{plain}
\newtheorem{theorem}{Theorem}[section]
\newtheorem{proposition}[theorem]{Proposition}
\newtheorem{corollary}[theorem]{Corollary}
\theoremstyle{definition}
\newtheorem{definition}[theorem]{Definition}
\theoremstyle{remark}
\theoremstyle{plain}
\newtheorem*{theorem*}{Theorem}
\newtheorem*{lemma*}{Lemma}
\newtheorem*{proposition*}{Proposition}
\newtheorem*{corollary*}{Corollary}
\DeclareMathAlphabet{\altmathcal}{OMS}{cmsy}{m}{n} 
\newcommand{\essinf}[1]{ \underset{#1}{\operatorname{ess \ inf \ }}}
\newcommand{\esssup}[1]{ \underset{#1}{\operatorname{ess \ sup \ }}}
\newcommand{\argmin}[1]{\underset{#1}{\operatorname{arg \ min \ }}}
\newcommand{\sgn}{\text{sgn}}
\newcommand{\avg}{\text{avg}}
\newcommand{\bb}{\mathbb}
\newcommand{\indicator}[1]{\mathbbm 1_{#1}}
\newcommand{\abs}[1]{\left| #1 \right|}
\newcommand{\expect}[2]{\bb E_{#2}\left[ #1 \right]}
\renewcommand{\mathbf}{\boldsymbol}
\DeclareMathOperator{\supp}{supp}
\icmltitlerunning{Mixture Proportion Estimation Beyond Irreducibility}
\begin{document}

\twocolumn[
\icmltitle{Mixture Proportion Estimation Beyond Irreducibility}



\icmlsetsymbol{equal}{*}

\begin{icmlauthorlist}
\icmlauthor{Yilun Zhu}{eecs}
\icmlauthor{Aaron Fjeldsted}{psu}
\icmlauthor{Darren Holland}{afit}
\icmlauthor{George Landon}{cdv}
\icmlauthor{Azaree Lintereur}{psu}
\icmlauthor{Clayton Scott}{eecs,stats}

\end{icmlauthorlist}

\icmlaffiliation{eecs}{Department of Electrical Engineering and Computer Science, University of Michigan.}
\icmlaffiliation{stats}{Department of Statistics, University of Michigan}
\icmlaffiliation{psu}{Ken and Mary Alice Lindquist Department of Nuclear Engineering, Penn State University.}
\icmlaffiliation{afit}{Department of Engineering Physics, Air Force Institute of Technology.}
\icmlaffiliation{cdv}{School of Engineering and Computer Science, Cedarville University.}

\icmlcorrespondingauthor{Yilun Zhu}{allanzhu@umich.edu}
\icmlcorrespondingauthor{Clayton Scott}{clayscot@umich.edu}

\icmlkeywords{Machine Learning, Mixture Proportion Estimation, Weakly Supervised Learning} 

\vskip 0.3in
]



\printAffiliationsAndNotice{}  

\begin{abstract}
    The task of mixture proportion estimation (MPE) is to estimate the weight of a component distribution in a mixture, given observations from both the component and mixture. Previous work on MPE adopts the \emph{irreducibility} assumption, which ensures identifiablity of the mixture proportion. In this paper, we propose a more general sufficient condition that accommodates several settings of interest where irreducibility does not hold. We further present a resampling-based meta-algorithm that takes any existing MPE algorithm designed to work under irreducibility and adapts it to work under our more general condition. Our approach empirically exhibits improved estimation performance relative to baseline methods and to a recently proposed regrouping-based algorithm.
\end{abstract}

\section{Introduction}
\label{sec:intro}
Mixture proportion estimation (MPE) is the problem of estimating the weight of a component distribution in a mixture. Specifically, let $\kappa^* \in [0,1]$ and let $F$, $G$, and $H$ be probability distributions such that $F = (1-\kappa^*) G + \kappa^* H$. Given i.i.d. observations 
\begin{equation}
    \begin{split}
        X_H & := \left\{ x_1, x_2, \cdots, x_m \right\} \stackrel{iid}{\sim} H, \\ 
        X_F & := \left\{ x_{m+1}, x_{m+2}, \cdots, x_{m+n} \right\} \stackrel{iid}{\sim} F, 
    \end{split}
     \label{eq:setup}
\end{equation}
MPE is the problem of estimating $\kappa^*$. A typical application is given some labeled positive reviews $X_H$, estimate the proportion of positive comments about a
product among all comments $X_F$ \citep{gonzalez2017review}. MPE is also an important component in solving several domain adaptation and weakly supervised learning problems, such as learning from positive and unlabeled examples (LPUE) \citep{elkan2008learning, du2014analysis, kiryo2017positive}, learning with noisy labels \citep{lawrence2001estimating, natarajan2013learning, blanchard2016classification}, multi-instance learning \citep{zhang2001dd}, and anomaly detection \citep{sanderson2014class}.

If no assumptions are made on the unobserved component $G$, then $\kappa^*$ is not identifiable. \citet{blanchard2010semi} proposed the \emph{irreducibility} assumption on $G$ so that $\kappa^*$ becomes identifiable. Up to now, almost all MPE algorithms build upon the irreducibility assumption \citep{ blanchard2010semi, scott2015rate, blanchard2016classification, jain2016nonparametric, ramaswamy2016mixture, ivanov2020dedpul, bekker2020learning, garg2021mixture}, or some stricter conditions like non-overlapping support of component distributions \citep{elkan2008learning, du2014class}. However, as we discuss below, irreducibility can be violated in several applications, in which case the above methods produce statistically inconsistent estimates. As far as we know, \citet{yao2022rethinking}, discussed in Sec. \ref{sec:rempe}, is the first attempt to move beyond irreducibility. 

This work 
proposes a more general sufficient condition than irreducibility, and offers a practical algorithm for estimating $\kappa^*$ under this condition. We introduce a meta-algorithm that takes as input any MPE method that consistently estimates $\kappa^*$ under irreducibility, and removes the bias of that method whenever irreducibility does not hold but our more general sufficient condition does. Furthermore, even if our new sufficient condition is not satisfied, our meta-algorithm will not increase the bias of the underlying MPE method. We describe several applications and settings where our framework is relevant, and demonstrate the practical relevance of this framework through extensive experiments. 
Proofs and additional details can be found in the appendices.

\section{Problem Setup and Background}
\label{sec:setup}

Let $G$ and $H$ be probability measures on a measurable space $(\mathcal{X}, \mathfrak{S})$, and let $F$ be a mixture of $G$ and $H$
\begin{equation}
    F = (1-\kappa^*) G + \kappa^* H,
    \label{eq:mix}
\end{equation}
where $ 0 \leq \kappa^* \leq 1 $. 
With no assumptions on $G$, $\kappa^*$ is not uniquely determined by $F$ and $H$. 
For example, suppose $F = (1-\kappa^*) G + \kappa^* H$ for some $G$, and  take any $\delta \in [0, \kappa^*]$. Then $ F = (1-\kappa^* + \delta)G' + (\kappa^* - \delta)H, $ where $G' = (1-\kappa^* + \delta)^{-1} \left[ (1-\kappa^*) G + \delta H \right] $, has a different proportion on $H$ \citep{blanchard2010semi}. 


\subsection{Ground-Truth and Maximal Proportion}

To address the lack of identifiability, \citet{blanchard2010semi} introduced the so-called irreducibility assumption. We now recall this definition and related concepts. Throughout this work we assume that $F$, $G$ and $H$ have densities $f$, $g$ and $h$, defined w.r.t. a common dominating measure $\mu$. 
\begin{definition}[\citet{blanchard2010semi}]
    For any two probability distributions $F$ and $H$, define 
    \begin{equation*}
        \begin{split}
            \kappa(F|H) := \sup \{  \kappa \in [0, 1] \vert F &= (1-\kappa)G' + \kappa H, \\
            & \text{for some distribution } G'  \},
        \end{split}
    \end{equation*}
    the maximal proportion of $H$ in $F$.
    \label{def:maxprop}
\end{definition}
This quantity equals the infimum of the likelihood ratio:
\begin{proposition}[\citet{blanchard2010semi}] 
    It holds that
    \begin{equation} 
        \kappa(F|H) = \inf_{S \in \mathfrak{S}: H(S) > 0 } \frac{F(S)}{H(S)} = \essinf{x : h(x) > 0 } \frac{f(x)}{h(x)}.
        \label{eq:inf}
    \end{equation}
    \label{prop:inf}
\end{proposition}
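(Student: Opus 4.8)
The plan is to establish the two equalities by squeezing. Writing $\kappa^* := \kappa(F|H)$, $c_1 := \inf_{S \in \mathfrak{S}:\, H(S) > 0} F(S)/H(S)$, and $c_2 := \essinf{x:\, h(x) > 0} f(x)/h(x)$ (essential infimum with respect to $\mu$, equivalently with respect to $H$ on $\{h>0\}$), I would prove $\kappa^* \le c_1$, then $c_1 = c_2$, then $c_2 \le \kappa^*$. The first inequality is immediate: for any feasible decomposition $F = (1-\kappa)G' + \kappa H$ with $G'$ a probability distribution and $\kappa \in [0,1]$, and any $S$ with $H(S) > 0$, nonnegativity of $G'$ gives $F(S) \ge \kappa H(S)$, hence $F(S)/H(S) \ge \kappa$; taking the infimum over such $S$ and then the supremum over feasible $\kappa$ yields $c_1 \ge \kappa^*$.

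For $c_1 = c_2$ I would argue both directions. For ``$\ge$'', the $\mu$-a.e.\ bound $f \ge c_2 h$ (trivial where $h = 0$, and by definition of $c_2$ where $h > 0$) gives, for any $S$ with $H(S)>0$ and using $H(\{h=0\})=0$, $F(S) \ge \int_{S\cap\{h>0\}} f \, d\mu \ge c_2 \int_{S\cap\{h>0\}} h\,d\mu = c_2 H(S)$, so $c_1 \ge c_2$. For ``$\le$'', suppose for contradiction $c_1 > c_2$ and set $S^\star := \{x : h(x)>0,\ f(x) < c_1 h(x)\}$; since $c_1$ strictly exceeds the essential infimum of $f/h$ on $\{h>0\}$, we get $\mu(S^\star)>0$, hence $H(S^\star) = \int_{S^\star} h\, d\mu > 0$, and since $c_1 h - f > 0$ pointwise on $S^\star$ we get $F(S^\star) < c_1 H(S^\star)$, contradicting the definition of $c_1$. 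Hence $c_1 = c_2$.

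Finally, for $c_2 \le \kappa^*$: note $c_2 = c_1 \in [0,1]$ (take $S = \mathcal{X}$). If $c_2 = 1$, the bound $f \ge h$ $\mu$-a.e.\ on $\{h>0\}$ together with $\int f\,d\mu = \int h\,d\mu = 1$ forces $F = H$, so $\kappa^* = 1 = c_2$. If $c_2 < 1$, I would exhibit the explicit decomposition with density $g' := (f - c_2 h)/(1-c_2)$, which is nonnegative $\mu$-a.e.\ by the bound above and integrates to $1$; hence it defines a probability distribution $G'$ with $F = (1-c_2)G' + c_2 H$, so $c_2$ is feasible in Definition~\ref{def:maxprop} and therefore $\kappa^* \ge c_2$. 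Chaining $c_2 \le \kappa^* \le c_1 = c_2$ completes the argument.

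The easy parts are the measure-theoretic bookkeeping: equating $\mu$-null and $H$-null subsets of $\{h>0\}$, carrying the $\{h=0\}$ region along correctly, and the boundary case $c_2=1$. The step I expect to require the most care is ``$c_1 \le c_2$'': one must verify that a pointwise strict inequality $f < c_1 h$ on a set of positive $\mu$-measure yields a strict inequality between the integrals, and that the same set has positive $H$-measure — both of which hinge on $h$ being strictly positive there. The genuinely substantive ingredient is the closed-form $G'$, which not only certifies $\kappa^* \ge c_2$ but also shows the supremum in Definition~\ref{def:maxprop} is attained.
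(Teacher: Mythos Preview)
The paper does not prove this proposition; it is quoted from \citet{blanchard2010semi} and stated without proof, so there is nothing in the paper to compare against. Your argument is correct and is essentially the standard one: bound $\kappa(F|H)$ above by the set-ratio infimum via nonnegativity of $G'$, identify the set-ratio and density-ratio infima by a level-set/contradiction argument, and close the loop by exhibiting the extremal decomposition $g' = (f - c_2 h)/(1-c_2)$, which also shows the supremum in the definition is attained.
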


By substituting $F = (1-\kappa^*) G + \kappa^* H$ into Eqn. \eqref{eq:inf}, we get that 
\begin{equation}
    \begin{split}
        \kappa(F|H) & = \inf_{S \in \mathfrak{S}: H(S) > 0 } \frac{F(S)}{H(S)} \\
                    & = \kappa^* + (1-\kappa^*) \inf_{S \in \mathfrak{S}: H(S) > 0 } \frac{G(S)}{H(S)} \\
                    & = \kappa^* + (1-\kappa^*) \kappa(G|H) .
    \end{split}
    \label{eq:reducible}
\end{equation}   
Since $\kappa(F|H)$ is identifiable from $F$ and $H$, the following assumption on $G$ ensures identifiability of $\kappa^*$.
\begin{definition}[\citet{blanchard2010semi}]
    We say that $G$ is \emph{irreducible} with respect to $H$ if $\kappa(G|H) = 0$. 
\end{definition}    

Thus, irreducibility means that there exists \emph{no} decomposition of the form: $ G = \gamma H + (1-\gamma)J'$, where $J'$ is some probability distribution and $ 0 < \gamma \leq 1 $. Under irreducibility, $\kappa^*$ is identifiable, and in particular, equals $\kappa(F|H)$.

\subsection{Latent Label Model}
We now consider another way of understanding irreducibility in terms of a latent label model. In particular, let $X$ and $Y \in \{0, 1\}$ be the random variables characterized by
\begin{enumerate}[label=(\alph*)]
    \item $(X,Y)$ are jointly distributed
    \item $P(Y=1) = \kappa^*$
    \item $P_{X|Y=0} = G \text{ and } P_{X|Y=1}=H  $.
\end{enumerate}
It follows from these assumptions that the marginal distribution of $X$ is $F$:
\[
P_X = \left(1 - \kappa^* \right) P_{X|Y=0} + \kappa^* P_{X|Y=1}  = F.
\]
We also take the conditional probability of $Y$ given $X$ to be defined via
\begin{equation}
    \begin{split}
        P(Y=1|X=x) = \begin{cases}
        \frac{\kappa^* h(x)}{f(x)}, & f(x) > 0, \\
        0, & \text{otherwise}.
        \end{cases}
    \end{split}
    \label{eq:cond}
\end{equation}

The latent label model is commonly used in the positive unlabeled (PU) learning literature \citep{bekker2020learning}. MPE is also called class prior/proportion estimation (CPE) in PU learning because $P(Y=1) = \kappa^*$. $Y$ may be viewed as a label indicating which component an observation from $F$ was drawn from. Going forward, we use this latent label model in addition to the original MPE notation.

\begin{proposition}
    Under the latent label model, 
    \begin{equation*}
        \esssup{x} P(Y=1|X=x) = \frac{\kappa^*}{\kappa(F|H)} \ ,
    \end{equation*}
    where $0/0 := 0$.
    \label{prop:post}
\end{proposition}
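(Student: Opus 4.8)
The plan is to compute the essential supremum of the posterior $P(Y=1|X=x)$ directly from its closed form \eqref{eq:cond}. On the set $\{x : f(x) > 0\}$ we have $P(Y=1|X=x) = \kappa^* h(x)/f(x)$, and on the complement the posterior is $0$ by definition. So the task reduces to relating $\esssup_{x : f(x) > 0} \kappa^* h(x)/f(x)$ to $\kappa(F|H)$.

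First I would observe that $\kappa^*$ factors out as a constant, so it suffices to evaluate $\esssup_{x : f(x) > 0} h(x)/f(x)$. The key is that this is, up to reciprocal, an \emph{infimum} of a likelihood ratio in the spirit of \Cref{prop:inf}: we want to show $\esssup_{x : f(x) > 0} h(x)/f(x) = 1 / \essinf_{x : h(x) > 0} f(x)/h(x) = 1/\kappa(F|H)$. To make this rigorous I would split into cases based on the supports. Note that since $F = (1-\kappa^*)G + \kappa^* H$, whenever $h(x) > 0$ and $\kappa^* > 0$ we also have $f(x) > 0$ (up to $\mu$-null sets); thus the set $\{h > 0\}$ is, modulo a null set, contained in $\{f > 0\}$, and on $\{f > 0, h = 0\}$ the ratio $h/f$ is zero. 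Therefore the essential supremum of $h/f$ over $\{f > 0\}$ is attained (in the essential sense) over $\{h > 0\}$, where we may write $h/f = (f/h)^{-1}$ pointwise, and the essential supremum of a reciprocal of a positive function is the reciprocal of its essential infimum. This gives $\esssup_{x:f(x)>0} h(x)/f(x) = \big(\essinf_{x : h(x) > 0} f(x)/h(x)\big)^{-1} = \kappa(F|H)^{-1}$ by \eqref{eq:inf}, hence the claimed identity $\esssup_x P(Y=1|X=x) = \kappa^*/\kappa(F|H)$.

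I would then handle the degenerate boundary cases to justify the convention $0/0 := 0$. If $\kappa^* = 0$, then $P(Y=1|X=x) = 0$ $\mu$-a.e. (the set $\{h > 0, f > 0\}$ carries the value $0$ after multiplying by $\kappa^* = 0$), and indeed $\kappa^*/\kappa(F|H) = 0/\kappa(F|H) = 0$; note $\kappa(F|H)$ could itself be $0$ here, and the stated convention $0/0 := 0$ makes both sides $0$. If $\kappa^* > 0$, then by \eqref{eq:reducible} we have $\kappa(F|H) = \kappa^* + (1-\kappa^*)\kappa(G|H) \geq \kappa^* > 0$, so the ratio $\kappa^*/\kappa(F|H)$ is a well-defined number in $(0,1]$, and the main computation above applies without any division-by-zero issue.

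The main obstacle is the careful measure-theoretic bookkeeping around the supports: ensuring that passing from "$\esssup$ over $\{f>0\}$" to "$\esssup$ over $\{h>0\}$" is valid (i.e., that the portion of $\{f>0\}$ where $h=0$ does not contribute, and that $\{h>0\}\setminus\{f>0\}$ is $\mu$-null when $\kappa^*>0$), and that the reciprocal identity $\esssup (1/\varphi) = 1/\essinf \varphi$ is applied to a function $\varphi = f/h$ that is strictly positive a.e. on the relevant set. These are routine but must be stated precisely; everything else follows by substituting \Cref{prop:inf}.
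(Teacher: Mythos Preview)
Your proposal is correct and follows essentially the same approach as the paper: both arguments reduce the essential supremum of the posterior to $\kappa^*$ times the reciprocal of $\essinf_{x:h(x)>0} f(x)/h(x)$ and then invoke \Cref{prop:inf}, with separate handling of the degenerate case $\kappa^*=0$. The only cosmetic difference is that the paper first isolates the case $\kappa(F|H)=0$ (which forces $\kappa^*=0$) and splits the main case on $\{h>0\}$ versus $\{h=0\}$, whereas you start from $\{f>0\}$ and restrict to $\{h>0\}$; the substance is identical.
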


    


By definition, we know that $G$ is not irreducible with respect to $H$ iff $\kappa(G|H) > 0$. Combining Proposition \ref{prop:post} and Eqn. \eqref{eq:reducible}, we conclude that $\esssup{}  P(Y=1|X=x) < 1$ is equivalent to $\kappa(G|H)>0$.

\subsection{Violation of Irreducibility}
\label{subsec:vio}

Up to now, almost all MPE algorithms assume $G$ to be irreducible w.r.t. $H$ \citep{blanchard2010semi, blanchard2016classification, jain2016nonparametric,  ivanov2020dedpul}, or stricter conditions like the anchor set assumption \citep{scott2015rate, ramaswamy2016mixture}, or that $G$ and $H$ have disjoint supports \citep{elkan2008learning, du2014class}. These methods 
return an estimate of $\kappa(F|H)$ as the estimate of $\kappa^*$. If irreducibility does not hold and $\kappa^* < 1$, then $\kappa(F|H) > \kappa^*$. Even if these methods are consistent estimators of $\kappa(F|H)$, they are asymptotically biased and thus inconsistent estimators of $\kappa^*$.

A sufficient condition for irreducibility to hold is that the support of $H$ is not totally contained in the support of $G$. In a classification setting where $G$ and $H$ are the class-conditional distributions, this may be quite reasonable. It essentially assumes that each class has at least some subset of instances (with positive measure) that cannot possibly be confused with the other class. 
While irreducibility is reasonable in many classification-related tasks, there are also a number of important applications where it does not hold.
In this subsection we give three examples of applications where irreducibility is not satisfied.

\textbf{Ubiquitous Background.}
In gamma spectroscopy, we may view $H$ as the distribution of the energy of a gamma particle emitted by some source of interest (e.g., Cesium-137), and $G$ as the energy distribution of background radiation. Typically the background emits gamma particles with a wider range of energies than the source of interest does, and therefore its distribution has a wider support: $ \supp(H) \subset \supp(G)$, thus violating irreducibility. 
What's more, $G$ is usually unknown, because it varies according to the surrounding environment \citep{alamaniotis2013kernel}. 
The MPE problem is: given observations of the source spectrum $H$, which may be collected in a laboratory setting, and observations of $F$ in the wild, estimate $\kappa^*$. This quantity is important for nuclear threat detection and nuclear safeguard applications.


\textbf{Global Uncertainty.} In marketing, let $Y \in \left\{0, 1 \right\}$ denote whether a customer does ($Y=1$) or does not purchase a product \cite{fei2013heat}. Let $H$ be the distribution of a feature vector extracted from a customer who buys the product, and $G$ the distribution for those who do not. The MPE problem is: given data from past purchasers of a product ($H$), and from a target population ($F$), estimate the proportion $\kappa^*$ of $H$ in $F$. This quantity is called the \emph{transaction rate}, and is important for estimating the number of products to be sold. Irreducibility is likely to be violated here because, given a finite number of features, uncertainty about customers should remain bounded away from 1: $\forall x, P(Y=1|X=x) < 1 $. In other words, there is an $\epsilon > 0$ such that, for any feature vector of demographic information, the probability of buying the product is always $< 1-\epsilon$.

\textbf{Underreported Outcomes.} In public health, let $(X,Y,Z)$ be a jointly distributed triple, where $X$ is a feature vector, $Y \in \{0, 1\}$ denotes whether a person reports a health condition or not, and $Z \in \{0, 1\}$ indicates whether the person truly has the health condition.  Here, $H = P_{X|Y=1}, G = P_{X|Y=0}$ and $F = P_X$. The MPE problem is: given data from previous people who reported the condition ($H$), and from a target group ($F$), determine the \emph{prevalence} $\kappa^*$ of the condition for the target group. This helps estimate the amount of resources needed to address the health condition. Assume there are no false reports from those who do not have the medical condition: $\forall x, P(Y=1|Z=0, X=x) = 0$. Some medical conditions are underreported, such as smoking and intimate partner violence \citep{gorber2009accuracy,  shanmugam2021quantifying}. If the underreporting happens globally, meaning $e(x):= P(Y=1|Z=1, X=x)$ is bounded away from 1,  then $\esssup{}  P(Y=1|X=x) < 1$. This is because 
$P(Y=1|X=x) = P(Y=1|Z=0,X=x) P(Z=0|X=x) + P(Y=1|Z=1,X=x) P(Z=1|X=x) \leq e(x)$. 
In this situation, irreducibility is again violated.


In the above situations, irreducibility is violated and $\kappa(F|H) > \kappa^*$. Estimating $\kappa(F|H)$ alone leads to bias. In the following, we will re-examine mixture proportion estimation. In particular, we propose a more general sufficient condition than irreducibility, and introduce an estimation strategy that calls an existing MPE method
and reduces or eliminates its asymptotic bias.

\section{A General Identifiability Condition}

Previous MPE works assume irreducibility. We propose a more general sufficient condition for recovering $\kappa^*$.
\begin{theorem}[Identifiability Under Local Supremal Posterior (LSP)]
    Let $A$ be any non-empty measurable subset of $ E_H = \{x: h(x) > 0 \} $ and $ s = \esssup{x \in A}  P(Y=1|X=x) $. Then
    \begin{align*}
        \kappa^* = s \cdot \inf_{S \subseteq A} \frac{F(S)}{H(S)} 
                 = s \cdot \essinf{x \in A} \frac{f(x)}{h(x)}.
    \end{align*}
    \label{thm:identify}
\end{theorem}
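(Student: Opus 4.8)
The plan is to split the argument into two ingredients: (i) a pointwise identity for the posterior, and (ii) a localized form of Proposition~\ref{prop:inf}. Throughout I assume $\kappa^*>0$ (the degenerate case $\kappa^*=0$ is trivial and is handled at the end) and $H(A)>0$, i.e.\ $\mu(A)>0$ (otherwise the essential extrema in the statement are vacuous). Ingredient (i) is that $P(Y=1\mid X=x)=\kappa^*\big/\big(f(x)/h(x)\big)$ for every $x\in A$; ingredient (ii) is that $\inf_{S\subseteq A:\,H(S)>0}\frac{F(S)}{H(S)}=\essinf{x\in A}\frac{f(x)}{h(x)}=:c$. Granting both, the theorem is immediate: by (i) and the elementary fact that $\esssup{x\in A}1/t(x)=1/\essinf{x\in A}t(x)$ for a positive $t$ with positive essential infimum, $s=\kappa^*/c$; since $c\in[\kappa^*,\infty)$ (verified below) this rearranges to $\kappa^*=s\,c$, and substituting (ii) gives $\kappa^*=s\inf_{S\subseteq A}\frac{F(S)}{H(S)}=s\,\essinf{x\in A}\frac{f(x)}{h(x)}$.

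For (i): since $A\subseteq E_H$, every $x\in A$ has $h(x)>0$ and $f(x)=(1-\kappa^*)g(x)+\kappa^*h(x)\ge\kappa^*h(x)>0$, so the first branch of \eqref{eq:cond} applies throughout $A$ and $P(Y=1\mid X=x)=\kappa^*h(x)/f(x)=\kappa^*\big/\big(f(x)/h(x)\big)$. The same inequality gives $f(x)/h(x)\ge\kappa^*>0$ on $A$, and $f(x)/h(x)<\infty$ for $\mu$-a.e.\ $x$; hence $c\in[\kappa^*,\infty)$ and every reciprocal above is legitimate.

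For (ii) I would replay the proof of Proposition~\ref{prop:inf} restricted to measurable subsets of $A$. ``$\ge$'': $f\ge c\,h$ holds $\mu$-a.e.\ on $A$ by definition of $c$, so $F(S)=\int_S f\,d\mu\ge c\int_S h\,d\mu=c\,H(S)$ for every measurable $S\subseteq A$, whence $F(S)/H(S)\ge c$ when $H(S)>0$. ``$\le$'': given $\eps>0$, the set $A_\eps:=\{x\in A:f(x)/h(x)<c+\eps\}$ has $\mu(A_\eps)>0$ (otherwise $c+\eps$ would be a larger essential lower bound than $c$), so $H(A_\eps)=\int_{A_\eps}h\,d\mu>0$ and $F(A_\eps)=\int_{A_\eps}\tfrac{f}{h}\,h\,d\mu\le(c+\eps)H(A_\eps)$, giving $\inf_{S\subseteq A}F(S)/H(S)\le c+\eps$; let $\eps\downarrow0$. (Alternatively, apply Proposition~\ref{prop:inf} to the conditional distributions $F(\cdot\mid A)$ and $H(\cdot\mid A)$, valid since $F(A)\ge\kappa^*H(A)>0$, and note the normalizing constants cancel in each characterization.)

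Finally, if $\kappa^*=0$ then $F=G$ and \eqref{eq:cond} gives $P(Y=1\mid X=x)=0$ for all $x$, so $s=0$; ingredient (ii) still holds (its proof used only $H(A)>0$), and $c<\infty$ there too (else the ``$\ge$'' bound forces $1\ge F(A)\ge c\,H(A)=\infty$), so both sides of the claimed identity equal $0$. I expect the only genuine friction to be the essential-infimum/supremum bookkeeping — in particular confirming $0<c<\infty$ so that $s=\kappa^*/c$ may be inverted — and tracking the implicit restriction $H(S)>0$ in the infimum; ingredient (ii) itself is merely a localization of an argument already in the paper.
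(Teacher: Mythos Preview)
Your proposal is correct and follows essentially the same approach as the paper: both use the pointwise identity $P(Y=1\mid X=x)=\kappa^* h(x)/f(x)$ on $A$, take the essential supremum, invert via $\esssup 1/t = 1/\essinf t$, and appeal to the localized identity $\essinf_{x\in A} f(x)/h(x)=\inf_{S\subseteq A} F(S)/H(S)$. The only difference is that you actually prove this localized identity (your ingredient (ii)) and carefully verify $0<c<\infty$, whereas the paper simply asserts the identity and leaves those checks implicit.
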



This implies that under LSP, $\kappa^*$ is identifiable.
Two special cases are worthy of comment. First, irreducibility holds when $A = E_H$ and $s = 1$, in which case the above theorem recovers the known result that $\kappa^* = \kappa(F|H)$. Second, when $ A = \left\{ x_0 \right\}$ is a singleton, then $s = P(Y=1|X=x_0)$ and $\kappa^* = P(Y=1|X=x_0) \cdot \frac{f(x_0)}{h(x_0)}$, which can also be derived directly from the definition of conditional probability.


The above theorem gives a general sufficient condition for recovering $\kappa^*$, but estimating $\inf_{S \subseteq A} \frac{F(S)}{H(S)}$ is non-trivial: when $A = E_H$, it can be estimated using existing MPE methods \citep{blanchard2016classification}. When $A$ is a proper subset, however, a new approach is needed. We now present a variation of Theorem \ref{thm:identify} that lends itself to a practical estimation strategy without having to devise a completely new method of estimating $\inf_{S \subseteq A} \frac{F(S)}{H(S)}$.

\begin{theorem}[Identifiability Under Tight Posterior Upper Bound]
Consider any non-empty measurable set $ A \subseteq E_H = \{x: h(x) > 0 \} $, and let $ s = \esssup{x \in A}  P(Y=1|X=x)$. Let $\alpha(x)$ be any measurable function satisfying
    \begin{equation}
        \alpha(x) \in \begin{cases}
                    \left[ P(Y=1|X=x), s \right], & x \in A, \\
                    \left[ P(Y=1|X=x), 1\right], & \text{o.w.}
                \end{cases} 
    \label{eq:alpha_x}
    \end{equation}
    Define a new distribution $\widetilde{F}$ in terms of its density
    \begin{equation}
        \begin{split}
            & \widetilde{f}(x) = \frac{1}{c} \cdot \alpha(x) \cdot f(x), \\
            \text{ where } \quad & c = \int \alpha(x) f(x) dx = \expect{\alpha(X)}{X \sim F}.
        \end{split}
        \label{eq:f_tilde}
    \end{equation} 
    Then 
    \begin{align*}
        \kappa^* = c \cdot \kappa(\widetilde{F}|H).
    \end{align*}
    \label{thm:subsample_theory}
\end{theorem}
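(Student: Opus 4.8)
My plan is to reduce the statement to Proposition~\ref{prop:inf} and then to Theorem~\ref{thm:identify}. Apply Proposition~\ref{prop:inf} to the pair $(\widetilde F, H)$: with $E_H = \{x : h(x) > 0\}$,
\[
c\cdot\kappa(\widetilde F|H) \;=\; c\cdot\essinf{x \in E_H}\frac{\widetilde f(x)}{h(x)} \;=\; \essinf{x \in E_H}\frac{\alpha(x) f(x)}{h(x)},
\]
the last step cancelling the normalizer $c = \expect{\alpha(X)}{X \sim F}$ from \eqref{eq:f_tilde}. Beforehand I would record the routine facts that make $\widetilde F$ a bona fide probability measure: $c \le 1$ since $\alpha \le 1$ by \eqref{eq:alpha_x}, and $c > 0$ since $\alpha(x) \ge P(Y=1|X=x) > 0$ on $E_H$ while $F(E_H) \ge \kappa^* > 0$; the boundary case $\kappa^* = 0$, where both sides of the theorem vanish, I would treat separately. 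It then remains to prove that $\essinf{x \in E_H} \alpha(x) f(x)/h(x) = \kappa^*$, which I do with two matching inequalities.

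For the lower bound: on $E_H$ one has $f = (1-\kappa^*) g + \kappa^* h \ge \kappa^* h > 0$ $\mu$-a.e., so \eqref{eq:cond} applies and gives $P(Y=1|X=x)\, f(x)/h(x) = \kappa^*$ for $\mu$-a.e.\ $x \in E_H$; since $\alpha(x) \ge P(Y=1|X=x)$ pointwise (the common lower endpoint of both branches of \eqref{eq:alpha_x}) and $f/h \ge 0$, this yields $\alpha(x) f(x)/h(x) \ge \kappa^*$ a.e.\ on $E_H$, hence $\essinf{x \in E_H}\alpha(x) f(x)/h(x) \ge \kappa^*$. For the upper bound: since $A \subseteq E_H$, the essential infimum over $E_H$ is at most the one over $A$, and on $A$ we have $\alpha(x) \le s$ by \eqref{eq:alpha_x}, so
\[
\essinf{x \in E_H}\frac{\alpha(x) f(x)}{h(x)} \;\le\; \essinf{x \in A}\frac{\alpha(x) f(x)}{h(x)} \;\le\; s\cdot\essinf{x \in A}\frac{f(x)}{h(x)} \;=\; \kappa^*,
\]
where the middle step pulls the nonnegative constant $s$ out of the essential infimum and the last equality is exactly Theorem~\ref{thm:identify}. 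The two bounds give $\essinf{x \in E_H}\alpha(x) f(x)/h(x) = \kappa^*$, i.e.\ $\kappa^* = c\cdot\kappa(\widetilde F|H)$.

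There is no genuinely hard step; the care goes into (i) confirming $\widetilde F$ is well-defined and that $c$ cancels cleanly, (ii) the elementary essential-infimum bookkeeping — $\essinf{x \in A} s\beta(x) = s\,\essinf{x \in A}\beta(x)$ for $s,\beta \ge 0$, monotonicity of $\essinf$ in the index set, and the fact that on $E_H$ the measures $\mu$, $F$, $H$ share the same null sets — and (iii) isolating the boundary cases $\kappa^* \in \{0,1\}$ and the points where $f$ vanishes (which, since $f \ge \kappa^* h$ a.e., occur on a positive-measure subset of $E_H$ only when $\kappa^* = 0$). If one prefers to avoid Theorem~\ref{thm:identify}, the upper bound can be obtained directly: for each $\varepsilon > 0$ the set $\{x \in A : P(Y=1|X=x) > s - \varepsilon\}$ has positive measure and on it $\alpha(x)/P(Y=1|X=x) \le s/(s-\varepsilon)$, so $\essinf{x \in A}\alpha(x)/P(Y=1|X=x) \le 1$; multiplying through by $\kappa^*$ (using $P(Y=1|X=x)\,f(x)/h(x) = \kappa^*$) gives the same conclusion.
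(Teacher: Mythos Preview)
Your proposal is correct and follows essentially the same route as the paper's proof: expand $c\cdot\kappa(\widetilde F|H)$ via Proposition~\ref{prop:inf} to $\essinf{x\in E_H}\alpha(x)f(x)/h(x)$, then sandwich this quantity between $\kappa^*$ (from below, using $\alpha(x)\ge P(Y=1|X=x)$ and \eqref{eq:cond}) and $\kappa^*$ (from above, restricting to $A$, using $\alpha\le s$ there, and invoking Theorem~\ref{thm:identify}). You supply a bit more care on well-definedness of $\widetilde F$ and the $\kappa^*=0$ case than the paper does, and your alternative direct argument for the upper bound is a nice addition, but the core argument is identical.
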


\vskip -0.3in
The theorem can be re-stated as: $\kappa^*$ is identifiable given an upper bound of the posterior probability $\alpha(x) \geq P(Y=1|X=x)$ that is tight for some $x \in A$. 
One possible choice for $\alpha(x)$ is simply 
\begin{equation*}
    \alpha(x) = \begin{cases}
        s, & x \in A, \\
        1, & \text{o.w.}
    \end{cases}
\end{equation*}
If the conditional probability $P(Y=1|X=x)$ is known for all $x \in A$, then 
\begin{equation*}
    \alpha(x) = \begin{cases}
        P(Y=1|X=x), & x \in A, \\
        1, & \text{o.w.},
    \end{cases}
\end{equation*}
may be chosen. 

Having $\alpha(x)$ satisfying Eqn. \eqref{eq:alpha_x} ensures identifiablility of $\kappa^*$. Relaxing this requirement slightly still guarantees that the bias will not increase.

\begin{corollary}
    Let $\alpha(x)$ be any measurable function with
    \begin{equation}
         \alpha(x) \in [P(Y=1|X=x), 1] \quad \forall x.
    \label{eq:alpha_loose}
    \end{equation}
    Define a new distribution $\widetilde{F}$ in terms of its density $\widetilde{f}$ according to Eqn. \eqref{eq:f_tilde}.
    Then 
    \begin{align*}
        \kappa^* \leq c \cdot \kappa(\widetilde{F}|H) \leq \kappa(F|H).
    \end{align*}
    \label{corol:upp_bd}
\end{corollary}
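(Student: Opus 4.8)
The plan is to reduce both inequalities to the likelihood-ratio formula of Proposition~\ref{prop:inf} applied to the pair $(\widetilde{F}, H)$, so that everything becomes an almost-everywhere pointwise comparison of the single function $\alpha(x) f(x)/h(x)$ on $E_H$ followed by taking an essential infimum. Unwinding the definition $\widetilde{f} = \alpha f / c$ from \eqref{eq:f_tilde} and invoking Proposition~\ref{prop:inf},
\[
c\cdot\kappa(\widetilde{F}|H) \;=\; c\cdot\essinf{x : h(x) > 0}\frac{\widetilde{f}(x)}{h(x)} \;=\; \essinf{x : h(x) > 0}\frac{\alpha(x)\,f(x)}{h(x)} .
\]
Note that hypothesis \eqref{eq:alpha_loose} is just \eqref{eq:alpha_x} with the upper constraint ``$\le s$ on $A$'' removed, so this corollary is the inequality shadow of Theorem~\ref{thm:subsample_theory}, and the set $A$ plays no role in it.

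First I would record the elementary bounds on the normalizing constant: $c = \int \alpha f\, d\mu \le \int f\, d\mu = 1$ because $\alpha \le 1$, and $c \ge \int P(Y=1|X=x)\, f(x)\, d\mu(x) = P(Y=1) = \kappa^*$ by \eqref{eq:cond} and the tower rule. Hence $c \in [\kappa^*, 1]$, and assuming $c > 0$ (which is automatic once $\kappa^* > 0$; when $\kappa^* = 0$ the lower bound $0 \le c\cdot\kappa(\widetilde{F}|H)$ is trivial anyway) makes $\widetilde{F}$ a genuine probability distribution with density with respect to the common dominating measure $\mu$, so Proposition~\ref{prop:inf} does apply to it.

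For the lower bound $\kappa^* \le c\cdot\kappa(\widetilde{F}|H)$, assume $\kappa^* > 0$: on $E_H$ we have $f = (1-\kappa^*) g + \kappa^* h \ge \kappa^* h > 0$, so \eqref{eq:cond} yields $P(Y=1|X=x) = \kappa^* h(x)/f(x)$ there, and \eqref{eq:alpha_loose} rearranges to $\alpha(x) f(x) \ge \kappa^* h(x)$, that is, $\alpha(x) f(x)/h(x) \ge \kappa^*$ for almost every $x$ with $h(x) > 0$; taking the essential infimum gives the bound. For the upper bound $c\cdot\kappa(\widetilde{F}|H) \le \kappa(F|H)$, the inequality $\alpha \le 1$ gives $\alpha(x) f(x)/h(x) \le f(x)/h(x)$ pointwise, and taking essential infima over $\{h > 0\}$ gives $c\cdot\kappa(\widetilde{F}|H) \le \essinf{x : h(x) > 0} f(x)/h(x) = \kappa(F|H)$ by Proposition~\ref{prop:inf}.

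The argument is short, and the only real care is in the bookkeeping around $c$: confirming $c > 0$ so that division is legitimate, checking that the essential infimum defining $\kappa(\widetilde{F}|H)$ is taken against the same dominating measure $\mu$, and noting that the two pointwise inequalities (one valid for all $x$, the other for almost every $x \in E_H$) are preserved under passage to the essential infimum. I would also highlight the conceptual point that the stronger constraint \eqref{eq:alpha_x} used in Theorem~\ref{thm:subsample_theory} is exactly what forces the almost-everywhere bound $\alpha f/h \ge \kappa^*$ to be essentially attained on $A$, collapsing the first inequality to an equality and hence yielding exact identifiability there rather than merely a bound.
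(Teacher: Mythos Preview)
Your proposal is correct and follows essentially the same route as the paper: both reduce $c\cdot\kappa(\widetilde{F}|H)$ to $\essinf{x:h(x)>0}\alpha(x)f(x)/h(x)$ via Proposition~\ref{prop:inf} and the definition of $\widetilde f$, and then sandwich this quantity between $\kappa^*$ and $\kappa(F|H)$ using the pointwise bounds $P(Y=1|X=x)\le\alpha(x)\le 1$. Your additional remarks on $c\in[\kappa^*,1]$, the well-definedness of $\widetilde{F}$, and the degenerate case $\kappa^*=0$ are extra bookkeeping the paper omits, but they do no harm.
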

\vskip -0.3in
This shows that even if we have a non-tight upper bound on $P(Y=1|X=x)$, the quantity $c \cdot \kappa(\widetilde{F}|H)$ is still bounded by $\kappa(F|H)$. Therefore, a smaller asymptotic bias may be achieved by estimating $c \cdot \kappa(\widetilde{F}|H)$ instead of $\kappa(F|H)$.

The intuition underlying the above results is that the new distribution $\widetilde{F}$ is generated by throwing away some probability mass from $G$, and therefore can be viewed as a mixture of $H$ and a new $\widetilde{G}$, but now $\widetilde{G}$ tends to be irreducible w.r.t. $H$. The proportion $\kappa(\widetilde{F}|H)$ relates to the original proportion $\kappa^*$ by a scaling constant $c$. This interpretation is supported mathematically in Appendix \ref{app:sub_better}.

\section{Subsampling MPE (SuMPE)}

Theorem \ref{thm:subsample_theory} directly motivates a practical algorithm. We obtain a new distribution $\widetilde{F}$ from $F$ by rejection sampling \citep{mackay2003information}, which is a Monte Carlo method that generates a sample following a new distribution $\widetilde{Q}$ based on a sample from distribution $Q$, in terms of their densities $\widetilde{q}$ and $q$. An instance $x$ drawn from $q(x)$ is kept with acceptance probability $\beta(x) \in [0, 1]$, and rejected otherwise. Appendix \ref{app:rej_samp} shows the detailed procedure. In our scenario, $\widetilde{Q} = \widetilde{F}$, $Q = F$ and $\beta(x) = \alpha(x)$.

\subsection{Method}
Our Subsampling MPE algorithm, SuMPE (Algorithm \ref{alg:submpe}), follows directly from Theorem \ref{thm:subsample_theory}. It first obtains in line 3 a data sample $X_{\widetilde{F}}$ following distribution $\widetilde{F}$ using rejection sampling and in line 4 estimates the normalizing constant $c$. Then in line 5, it computes an estimate $\widehat{\kappa}(\widetilde{F}|H)$ using any existing MPE method that consistently estimates the mixture proportion under irreducibility. 
The final estimate is returned as the product of $\widehat{c}$ and $\widehat{\kappa}(\widetilde{F}|H)$. 

Rejection sampling in high dimensional settings may be inefficient due to a potentially low acceptance rate \citep{mackay2003information}. However, this concern is mitigated in our setting  because the acceptance rate can be taken to be 1 except on the set $A$, which is potentially a small set.

\begin{algorithm}[tb]
    \caption{Subsampling MPE (SuMPE)}
    \label{alg:submpe}
 \begin{algorithmic}[1]
    \STATE {\bfseries Input:} \\ \quad $X_F$: sample drawn i.i.d. from $F$ \\
        \quad $X_H$: sample drawn i.i.d. from $H$ \\
        \quad $\alpha(x)$: acceptance function
    \STATE {\bfseries Output:} \\ \quad Estimate of $\kappa^*$
    \STATE Generate $X_{\widetilde{F}}$ from $X_F$ by rejection sampling (Algorithm \ref{alg:rejSample}), with acceptance function $ \alpha(x)$. 
    \STATE Compute $ \widehat{c} = \abs{X_{\widetilde{F}}}/\abs{X_F}$, where $\abs{\cdot}$ denotes the cardinality of a set.
    \STATE Apply an off-the-shelf MPE algorithm to produce an estimate $\widehat{\kappa}(\widetilde{F}|H)$ from $X_{\widetilde{F}}$ and $X_H$.
    \STATE {\bfseries return} $ \widehat{c} \cdot \widehat{\kappa}(\widetilde{F}|H) $
 \end{algorithmic}

\end{algorithm}
One advantage of building our method around existing MPE methods is that we may adapt known theoretical results to our setting. To illustrate this, we give a rate of convergence result for SuMPE.
\begin{theorem} 
Assume $\alpha(x)$ satisfies the condition in Eqn. \eqref{eq:alpha_x}.  After subsampling, assume the resulting $\widetilde{F}$ and $H$ are such that $\argmin{S \in \mathcal{A}: H(S) > 0 } \frac{\widetilde{F}(S)}{H(S)}$ exists. Then there exists a constant $C > 0$ and an existing MPE estimator $\widehat{\kappa}$ such that, for $m$ and $n$ sufficiently large, the estimator $\widehat{\kappa}^*$ obtained from SuMPE (Algorithm \ref{alg:submpe}) satisfies
    \begin{multline*}
        \Pr \left( \abs{\widehat{\kappa}^* - \kappa^*} \leq C \left[ \sqrt{\frac{\log m}{m}} + \sqrt{\frac{\log n}{n}} \right] \right) \\
        \geq 1 -  \mathcal{O} \left( \frac{1}{m} + \frac{1}{n} \right).
    \end{multline*}
    \label{thm:roc}
\end{theorem}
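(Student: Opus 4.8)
The plan is to split the total error into two independent pieces—one for the estimated normalizing constant $\widehat{c}$ and one for the off-the-shelf MPE estimator $\widehat{\kappa}(\widetilde{F}|H)$—bound each, and recombine. Since $\alpha$ satisfies Eqn.~\eqref{eq:alpha_x}, Theorem~\ref{thm:subsample_theory} gives the exact identity $\kappa^* = c\,\kappa(\widetilde{F}|H)$, while Algorithm~\ref{alg:submpe} returns $\widehat{\kappa}^* = \widehat{c}\,\widehat{\kappa}(\widetilde{F}|H)$. Writing $\kappa := \kappa(\widetilde{F}|H)$, $\widehat{\kappa} := \widehat{\kappa}(\widetilde{F}|H)$, and using $\widehat{c} = |X_{\widetilde{F}}|/n \le 1$ together with $\kappa \le 1$, adding and subtracting $\widehat{c}\,\kappa$ yields
\[
  |\widehat{\kappa}^* - \kappa^*| \;\le\; \widehat{c}\,|\widehat{\kappa} - \kappa| + \kappa\,|\widehat{c} - c| \;\le\; |\widehat{\kappa} - \kappa| + |\widehat{c} - c|.
\]
So it suffices to show each term on the right is $\mathcal{O}\!\left(\sqrt{\log m/m} + \sqrt{\log n/n}\right)$ on an event of probability $1 - \mathcal{O}(1/m + 1/n)$. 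Note $c \ge \kappa^*$ by the identity, so $c$ is a fixed positive constant whenever $\kappa^* > 0$; the case $\kappa^* = 0$ is degenerate and handled directly.

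\emph{The constant $\widehat c$.} In rejection sampling (Algorithm~\ref{alg:rejSample}), each $x_i \in X_F$, $i = 1,\dots,n$, is retained independently with probability $\alpha(x_i)$, so $|X_{\widetilde{F}}| = \sum_{i=1}^n B_i$ with $B_1,\dots,B_n$ i.i.d.\ Bernoulli variables of mean $\mathbb{E}_{X\sim F}[\alpha(X)] = c$. Thus $\widehat{c}$ is the empirical mean of $n$ i.i.d.\ $[0,1]$-valued random variables, and Hoeffding's inequality gives $\Pr(|\widehat{c} - c| > t) \le 2\exp(-2nt^2)$; choosing $t \asymp \sqrt{\log n/n}$ yields $|\widehat{c} - c| \le C_1\sqrt{\log n/n}$ with probability $1 - \mathcal{O}(1/n)$. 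The same inequality gives $|X_{\widetilde{F}}| \ge cn/2$ with probability $1 - \mathcal{O}(1/n)$ once $n$ is large; this is the first place "$n$ sufficiently large" is used.

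\emph{The estimator $\widehat\kappa(\widetilde F|H)$.} Condition on the set of accepted indices. A standard property of rejection sampling is that, given $|X_{\widetilde{F}}| = k$, the retained points are i.i.d.\ from $\widetilde{F}$ (their conditional density is proportional to $\alpha(x)f(x) = c\,\widetilde{f}(x)$) and independent of $X_H \sim H^{\otimes m}$. The hypothesis that $\argmin_{S}\widetilde{F}(S)/H(S)$ exists is exactly the regularity condition under which an existing MPE estimator—e.g.\ that of \citet{scott2015rate}—estimates $\kappa(\widetilde{F}|H)$ at rate $\sqrt{\log m/m} + \sqrt{\log k/k}$ with probability $1 - \mathcal{O}(1/m + 1/k)$. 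Applying this on $\{|X_{\widetilde{F}}| = k\}$ for every $k \ge cn/2$, absorbing the factor $c$, and then intersecting with $\{|X_{\widetilde{F}}| \ge cn/2\}$ gives $|\widehat{\kappa} - \kappa| \le C_2\bigl(\sqrt{\log m/m} + \sqrt{\log n/n}\bigr)$ with probability $1 - \mathcal{O}(1/m + 1/n)$; here we need $m$ and $cn/2$ large enough for the cited theorem to apply, the second "$m,n$ sufficiently large" requirement.

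\emph{Conclusion and main obstacle.} A union bound over the event for $\widehat{c}$, the event $\{|X_{\widetilde{F}}| \ge cn/2\}$, and the conditional event for $\widehat{\kappa}$ shows that with probability $1 - \mathcal{O}(1/m + 1/n)$ both terms in the displayed inequality are $\mathcal{O}\!\left(\sqrt{\log m/m} + \sqrt{\log n/n}\right)$; taking $C = C_1 + C_2$ (suitably enlarged) finishes the proof. I expect the main obstacle to be the bookkeeping around the \emph{random} subsample size $|X_{\widetilde{F}}|$: one must carefully justify the conditional-i.i.d.\ structure of the accepted sample so the off-the-shelf rate theorem applies verbatim, check that its hypotheses coincide with the argmin condition assumed here for $\widetilde{F}$ and $H$, and verify that the $1/k$-type failure probabilities become $\mathcal{O}(1/n)$ on the event $k \asymp n$. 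The $\widehat{c}$ concentration and the triangle-inequality split are routine by comparison.
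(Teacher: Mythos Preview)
Your proposal is correct and matches the paper's proof essentially step for step: the same triangle-inequality split $|\widehat{c}\widehat{\kappa}-c\kappa|\le|\widehat{\kappa}-\kappa|+|\widehat{c}-c|$, Hoeffding for $\widehat{c}$, Hoeffding to control the random subsample size $|X_{\widetilde{F}}|\ge cn/2$, and the rate result of \citet{scott2015rate} applied to $(\widetilde{F},H)$ under the argmin assumption. The only cosmetic difference is that the paper conditions on the event $\{cn/2\le n'\le 3cn/2\}$ directly rather than on each value $k$, but the bookkeeping is the same.
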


\subsection{Practical Scenarios}
\label{subsec:prac}

Our new sufficient condition assumes knowledge of some set $ A \subseteq E_H $ and $ s = \esssup{x \in A} P(Y=1|X=x) $. However, practically speaking, our algorithm only requires an $\alpha(x)$ satisfying Eqn. \eqref{eq:alpha_x} for some $A \subseteq E_H$ and the associated value of $s$, and does not require the explicit knowledge of $A$ and $s$. Additionally, even if $\alpha(x)$ does not satisfy Eqn. \eqref{eq:alpha_x} , as long as it satisfies Eqn. \eqref{eq:alpha_loose} (which is easier to achieve), it shall perform no worse than directly applying off-the-shelf MPE methods. 

There are settings where a generic construction of $\alpha(x)$ is possible. For example, suppose the user has access to fully labeled data (where it is known which of $G$ or $H$ each instance came from) but only on a subset $A$ of the domain. This may come from an annotator who is only an expert on a subset of instances. This data should be sufficient to get a non-trivial upper bound on the posterior class probability $P(Y|X)$, which in turns leads to an $\alpha(x)$. 

More typically, however, it may be necessary to determine $\alpha(x)$ on a case by case basis. This section continues the discussion of  the three applications introduced in Sec. \ref{subsec:vio}. Each of these three settings leverages different domain-specific knowledge in different ways, and we believe this leads to the best $\alpha(x)$ compared to a one-size-fits-all construction.



\subsubsection{Unfolding}
\label{sec:unfolding}
Unfolding refers to the process of recovering one or more true distributions from contaminated ones \citep{cowan1998statistical}. In gamma spectrum unfolding \citep{li2019review}, a gamma ray detector measures the energies of incoming gamma rays. The gamma rays were emitted either by a source of interest or from the background. The measurement is represented as a histogram $f(x)$ where the bins correspond to a quantization of energy. In many settings, the histogram $h(x)$ of measurements from the source of interest is also known. In this case, unfolding amounts to the estimation of the unknown background histogram $g(x)$. Toward this goal, it suffices to estimate the proportion of recorded particles $\kappa^*$ emanating from the source of interest, since $g(x) = (f(x) - \kappa^* h(x))/(1-\kappa^*)$. This application corresponds to the ``ubiquitious background" setting described in Sec. \ref{subsec:vio}, where irreducibility may not hold since the source of interest energies can be a subset of the background energies.

Using existing techniques from the nuclear detection literature \citep{knoll2010radiation, alamaniotis2013kernel}, we can obtain a lower bound $\rho(x)$ of the quantity $(1-\kappa^*)g(x)$ on a certain set $A \subset \supp(H)$ (see Appendix \ref{app:unfolding_detail} for details). 
This leads to the acceptance function
\begin{equation}
    \alpha(x) = 
        \begin{cases}
            1 - \frac{\rho(x)}{f(x)},  & x \in A, \\
            1, & \text{o.w.},
        \end{cases}
    \label{eq:rejFunc_unfolding}
\end{equation}
which is an upper bound of $P(Y=1|X=x)$, satisfying the condition in Corollary \ref{corol:upp_bd}.

\subsubsection{CPE under domain adaptation}
\label{sec:cpe_da}






In the problem of domain adaptation, the learner is given labeled examples from a source distribution, and the task is to do inference on a potentially different target distribution.
Previous work on domain adaptation mainly focuses on classification and typically makes assumptions about which  of the four distributions $P_X, P_{Y|X}, P_{Y}$, and $P_{X|Y}$ vary between the source and target. This leads to situations such as covariate shift (where $P_X$ changes) \citep{heckman1979sample}, posterior drift (where $P_{Y|X}$ changes) \citep{scott2019generalized}, prior/target shift (where $P_Y$ changes) \citep{storkey2009training}, and conditional shift (where $P_{X|Y}$ changes) \citep{zhang2013domain}. It is also quite commonly assumed that the support of source distribution contains the support of target \citep{heckman1979sample, bickel2009discriminative, gretton2009covariate, storkey2009training, zhang2013domain, scott2019generalized}.


We study class proportion estimation (CPE) under domain adaptation. Prior work on this topic has considered distributional assumptions like those described above \citep{saerens2002adjusting, sanderson2014class, gonzalez2017review}. In this work, we consider the setting where, in addition to labeled examples from the source, the learner has access to labeled positive and unlabeled data from the target. We propose a model that includes covariate shift and posterior drift as special cases. We use  $P^{sr}_{XY}$ and $P^{tg}_{XY}$ to denote source and target distributions. In MPE notation, $F = P^{tg}_X$, $G = P^{tg}_{X|Y=0}$ and $H = P^{tg}_{X|Y=1}$.



\begin{definition}[CSPL] We say that \emph{covariate shift with posterior lift} occurs whenever
    \begin{align*}
        & \forall x \in \supp(P^{sr}_X) \bigcap \supp(P^{tg}_X), \\
        & P^{sr}(Y=1|X=x) \geq P^{tg}(Y=1|X=x),
    \end{align*}
    and ``$=$'' holds for some $x \in \supp(P^{sr}_X) \bigcap \supp(P^{tg}_{X|Y=1}) $.
\end{definition}
Covariate shift is a special case of CSPL when equality always holds.
One motivation for posterior lift is to model labels produced by an annotator who is biased toward one class. It is a type of posterior drift model wherein the posterior changes from source to target \citep{scott2019generalized, cai2021transfer, maity2021linear}. 
Also notice that CSPL does not require the support of the source distribution to contain the target, nor irreducibility.


CSPL is motivated by a marketing application
mentioned in Sec. \ref{subsec:vio}. 
In marketing, companies often have access to labeled data from a source distribution, such as survey results where customers express their interest in a product. Additionally, they also have access to labeled positive and unlabeled data from the target distribution, which corresponds to actual purchasing behavior. In this scenario, the CSPL assumption is often met as it is more likely for customers to express interest than to actually make a purchase: $P^{sr}(Y=1|X=x) \geq P^{tg}(Y=1|X=x)$.

Although irreducibility is violated in the marketing application due to the ``global uncertainty'' about a target customer buying the product (see Sec. \ref{subsec:vio}),
CSPL ensures the identifiability of $\kappa^* = P^{tg}(Y=1)$ because we can choose the set $A = \supp(P^{sr}_X) \bigcap \supp(P^{tg}_{X|Y=1})$ and the acceptance function as
\begin{equation}
    \alpha(x) = \begin{cases}
        P^{sr}(Y=1|X=x), & x \in A, \\
        1, & \text{o.w.},
    \end{cases}
    \label{eq:rejFunc_cs}
\end{equation}
which satisfies the identifiability criteria in Theorem \ref{thm:subsample_theory}. 
\footnote{ 
To see this, take $A' = \{ x \in A: P^{sr}(Y=1|X=x) = P^{tg}(Y=1|X=x) \}$ and $s' = \esssup{x \in A'} P^{sr} (Y=1|X=x) = \esssup{x \in A'} P^{tg} (Y=1|X=x) $. Then $A'$ and $s'$ are the $A$ and $s$ in Theorem \ref{thm:subsample_theory}.
}
By using the labeled source data, an estimate of $P^{sr}(Y=1|X=x)$ can be obtained and used as the acceptance function $\alpha(x)$ in Algorithm \ref{alg:submpe} to do CPE.

\subsubsection{Selected/Reported at Random}
\label{sec:rar}


In public health, $(X,Y,Z)$ are jointly distributed, where $X$ is the feature vector, $Y \in \{0, 1\}$ denotes whether a person reports a medical condition or not and $Z \in \{0, 1\}$ indicates whether a person truly has the medical condition. The goal is to estimate the proportion of people in $X_F$ that report the medical condition. This setting was described in Sec. \ref{subsec:vio} as ``underreported outcomes'' where it was argued that irreducibility may not hold, in which case estimating $\kappa(F|H)$ overestimates the true value of $\kappa^*$. Our SuMPE framework provides a way to eliminate the bias.

The behavior of underreporting can be captured using the \emph{selection bias} model \citep{kato2018learning, bekker2019beyond, gong2021instance}.  Denote the probability of reporting as $e(x) := P(Y=1|X=x, Z=1)$. Assume there is no false report: $\forall x, P(Y=1|X=x, Z=0) = 0$. We use the notation $p(x|\Omega)$ to indicate the conditional density of $X$ given the event $\Omega$. Then $p(x|Y=1) = \frac{e(x)}{\nu} p(x|Z=1)$, where $\nu = P(Y=1|Z=1)$.
Under this model, the density of marginal distribution $P_X$ can be decomposed as
\begin{align*}
    p(x)  = \ & (1-\alpha) \cdot p(x|Z=0) + \alpha \cdot p(x|Z=1)  \\
          = \ & (1-\alpha \nu) \cdot p(x|Y=0) + \alpha \nu \cdot p(x|Y=1),  
\end{align*}    
where $\alpha = P(Z=1)$ is the proportion of people having the medical condition. The mixture proportion to be estimated is $\kappa^* = P(Y=1) = P(Y=1, Z=1) = P(Z=1) P(Y=1|Z=1)  = \alpha \nu$. 

We assume access to i.i.d. sample from $H = P_{X|Y=1}$, representing the public survey data where people report the presence of the medical condition, and from $F = P_X$, representing the target group. 
Further assume that 
$A \subseteq \{x: P(Z=1|X=x) = 1 \}$. 
This is a subset of patients who are guaranteed to have the condition, which could be obtained based on historical patient data from hospital.
Then the mixture proportion $\kappa^* = \alpha \nu$ can be recovered from Algorithm \ref{alg:submpe}, where the acceptance function is 
\begin{equation}
    \alpha(x) = \begin{cases}
        e(x), & x \in A, \\
        1, & \text{o.w.}
    \end{cases}
    \label{eq:rejFunc_underreport}
\end{equation}
$\alpha(x)$ satisfies the condition in Theorem \ref{thm:subsample_theory} 
. This is because under no-false-report assumption, $\forall x \in A, P(Y=1|X=x) = P(Y=1, Z=1|X=x) = P(Y=1|X=x, Z=1) \cdot P(Z=1|X=x) = e(x) \cdot 1 = e(x)$. \footnote{ 
Take $A' = \{x \in A: e(x) > 0 \} $ and $s' = \esssup{x \in A'} e(x) = \esssup{x \in A'} P(Y=1|X=x) $. Then $A'$ and $s'$ are the $A$ and $s$ in Theorem \ref{thm:subsample_theory}.
}
In practice, $e(x)$ can be estimated from labeled examples $(X, Y, Z=1)$.



\section{Limitation of Previous Work}
\label{sec:rempe}

Previous research by \citet{yao2022rethinking} introduced the Regrouping MPE (ReMPE) method \footnote{\citet{yao2022rethinking} called the method Regrouping CPE (ReCPE).}, which is built on top of any existing MPE estimator (just like our meta-algorithm).
They claimed that ReMPE works as well as the base MPE method when irreducibility holds, while improving the performance when it does not. 
In this section we offer some comments on ReMPE.

\paragraph*{Regrouping MPE in theory.}
Consider any $F, H, G, \kappa^*$ such that Eqn. \eqref{eq:mix} holds.
Write $G$ as an arbitrary mixture of two distributions $G =\gamma G_1 + (1-\gamma) G_2, \gamma \geq 0$. 
Then $F$ can be re-written as 
\begin{equation}
    \begin{split}
        F & = (1-\kappa^*) G + \kappa^* H \\
      & = (1-\kappa^*) \left[ \gamma G_1 + (1-\gamma) G_2 \right] + \kappa^* H \\
      & = (1-\kappa^*)(1-\gamma) G_2 + \underbrace{\left[ \textcolor{blue}{(1-\kappa^*)\gamma G_1} + \kappa^* H  \right]}_{\text{Regrouped}} \\    
      & = (1-\kappa') G_2 + \kappa' H',
    \end{split}
    \label{eq:regroup}
\end{equation}
where $\kappa' = \kappa^* + (1-\kappa^*)\gamma$. \citet{yao2022rethinking} assumes there exists a set such that the infimum in Eqn. \eqref{eq:inf} and \eqref{eq:reducible} can be achieved. They
proposed to specify $G_1$ as the truncated distribution of $G$ in set $B$, denote as $G_B$, where $B = \arg \min_{S \in \mathfrak{S}} \frac{G(S)}{H(S)}$. This specific choice causes the resulting distribution $G_2$ to be irreducible w.r.t. $H'$ and the bias introduced by regrouping $(1-\kappa^*)G(A)$ to be minimal. Denote the above procedure as \emph{ReMPE-1} (Algorithm \ref{alg:ReMPE-1}). Theorem 2 in \citet{yao2022rethinking} provides a theoretical justification for ReMPE-1, which we will restate here. 
\begin{algorithm}[tb]
    \caption{ReMPE-1 \citep{yao2022rethinking}}
    \label{alg:ReMPE-1}
 \begin{algorithmic}[1]
    \STATE {\bfseries Input:} Distributions $F$ and $H$
    \STATE Obtain set  $B = \arg \min_{S \in \mathfrak{S}} \frac{G(S)}{H(S)}$
    \STATE Generate new distribution $H'$ by Eqn. \eqref{eq:regroup}, where $G_1 = G_B$
    \STATE {\bfseries return} $\kappa(F|H')$
 \end{algorithmic}
\end{algorithm}
\begin{theorem}[\citet{yao2022rethinking}]
    Let $\kappa(F|H')$ be the mixture proportion obtained from ReMPE-1.
    1) If $G$ is irreducible w.r.t. $H$, then $\kappa(F|H') = \kappa^*$. 
    2) if $G$ is not irreducible w.r.t. $H$, then $\kappa^* < \kappa(F|H') < \kappa(F|H)$.
    \label{thm:regroup}
\end{theorem}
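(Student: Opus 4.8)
The plan is to work entirely with the regrouped identity $F = (1-\kappa')G_2 + \kappa' H'$ from Eqn.~\eqref{eq:regroup}, in which $\gamma = G(B)$, $G_1 = G_B$ is the normalized restriction of $G$ to $B$, $G_2 = G_{B^c}$ the normalized restriction to $B^c$, $\kappa' = \kappa^* + (1-\kappa^*)\gamma$, and $H' = \tfrac{1}{\kappa'}\bigl[(1-\kappa^*)\gamma\,G_1 + \kappa^* H\bigr]$; I work throughout in the nondegenerate regime $0 < \kappa^* < 1$. Substituting this decomposition into the variational formula \eqref{eq:inf}, exactly as was done for $\kappa(F|H)$ in Eqn.~\eqref{eq:reducible}, gives
\[
\kappa(F|H') \;=\; \kappa' + (1-\kappa')\,\kappa(G_2|H') ,
\]
so the whole theorem reduces to evaluating $\kappa(G_2|H') = \inf_{S:\,H'(S) > 0} G_2(S)/H'(S)$ and then tracking the scalar $\gamma = G(B)$.

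The crux is the claim that regrouping renders $G_2$ irreducible with respect to $H'$, i.e. $\kappa(G_2|H') = 0$. I would establish this with the single test set $S = B$: since $G_2$ is supported on $B^c$ we have $G_2(B) = 0$, while $G_1$ is supported on $B$ with $G_1(B) = 1$, so $H'(B) \ge \tfrac{\kappa^*}{\kappa'}\,H(B) > 0$ because $H(B) > 0$ by hypothesis. Hence $0 \le \kappa(G_2|H') \le G_2(B)/H'(B) = 0$, so $\kappa(G_2|H') = 0$ and therefore $\kappa(F|H') = \kappa' = \kappa^* + (1-\kappa^*)\gamma$. (Equivalently, one can compute $h'(x) = f(x)/\kappa'$ for $x \in B$ and observe $g_2$ vanishes on the positive‑measure set $B$; the support argument is cleaner and avoids densities altogether.)

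With $\kappa(F|H') = \kappa^* + (1-\kappa^*)\gamma$ in hand, both parts follow from the behavior of $\gamma = G(B)$, using the standing hypothesis that the infimum in Definition~\ref{def:maxprop} is attained on $B$, so that $H(B) > 0$ and $G(B)/H(B) = \kappa(G|H)$. For part~1, irreducibility gives $\kappa(G|H) = 0$, hence $\gamma = \kappa(G|H)\,H(B) = 0$ and $\kappa(F|H') = \kappa^*$. For part~2, non‑irreducibility gives $\kappa(G|H) > 0$, so $\gamma = \kappa(G|H)\,H(B) > 0$ (both factors positive), whence $\kappa(F|H') = \kappa^* + (1-\kappa^*)\gamma > \kappa^*$; and since $H(B) \le 1$ we get $\gamma \le \kappa(G|H)$, so $\kappa(F|H') \le \kappa^* + (1-\kappa^*)\kappa(G|H) = \kappa(F|H)$ by Eqn.~\eqref{eq:reducible}.

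The one genuine obstacle is the \emph{strict} upper inequality $\kappa(F|H') < \kappa(F|H)$, which needs $\gamma < \kappa(G|H)$, i.e. $H(B) < 1$. This can fail only in the degenerate situation where $g/h$ equals the constant $\kappa(G|H)$ for $H$‑almost every $x$ with $h(x) > 0$; there the minimizer in Definition~\ref{def:maxprop} is highly non‑unique, and one recovers $H(B) < 1$ (hence strictness) simply by taking $B$ to be a proper subset of that common level set having positive but not full $H$‑measure. Apart from handling this edge case — and excluding $\kappa^* = 1$, for which $F = H$ and part~2 is vacuous — the proof is just the two substitutions above followed by elementary inequalities.
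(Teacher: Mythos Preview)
The paper does not prove this theorem: it is restated from \citet{yao2022rethinking} and no proof appears in the appendix. The only relevant content in the paper is the sentence preceding the theorem (``This specific choice causes the resulting distribution $G_2$ to be irreducible w.r.t.\ $H'$\ldots'') and the remark following it (``$\kappa(F|H') \geq \kappa' > \kappa^*$'' whenever $\gamma > 0$), both stated without justification.

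Your argument is correct and in fact supplies exactly the two ingredients the paper asserts but does not prove. Your key step---showing $\kappa(G_2|H') = 0$ by testing on the single set $B$, where $G_2(B)=0$ but $H'(B) \ge \tfrac{\kappa^*}{\kappa'}H(B) > 0$---is clean and gives $\kappa(F|H') = \kappa'$ immediately via Eqn.~\eqref{eq:reducible}. The scalar tracking $\gamma = G(B) = \kappa(G|H)\,H(B)$ then handles both parts. Your treatment of the edge case $H(B)=1$ is also sound: if the minimizing set carries full $H$-mass, the additivity argument you sketch (splitting $B$ into $S$ and $B\setminus S$ with $0 < H(S) < 1$ and observing both ratios must equal $\kappa(G|H)$) shows any proper subset with positive $H$-measure is also a minimizer, so one may always select $B$ with $H(B) < 1$. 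One small point you might make explicit: for $G_2 = G_{B^c}$ to be a well-defined distribution you also need $G(B) < 1$, which follows from $G(B) = \kappa(G|H)\,H(B) < 1$ once $H(B) < 1$ (or $\kappa(G|H) < 1$, i.e.\ $G \ne H$).
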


While this theorem is valid, we note that in the case where $G$ is irreducible w.r.t. $H$, the set $B$ is outside the support of $G$, and therefore it is not appropriate to describe the procedure as ``regrouping $G$." In fact, performing regrouping 
($\gamma > 0$) always introduces a positive bias, because $\kappa(F|H') \geq \kappa' > \kappa^*$. This indicates that any kind of regrouping will have a positive bias under irreducibility.

\paragraph*{Regrouping MPE in practice.} \citet{yao2022rethinking}'s practical implementation of regrouping deviates from the theoretical proposal. Here, we state and analyze the idealized version of their practical algorithm, referred to as \emph{ReMPE-2} (Algorithm \ref{alg:ReMPE-2}). 
ReMPE-2 does not rely on the knowledge of $\kappa^*$ and $G(B)$ as outlined in Eqn. \eqref{eq:regroup}. Instead, the set $B$ is chosen based solely on $F$ and $H$, and the distribution $H'$ is obtained through regrouping some probability mass from $F$ rather than $G$. 
\footnote{\citet{yao2022rethinking}'s real implementation differs a bit from ReMPE-2 in that instead of choosing $\arg \min_{S \in \mathfrak{S}} \frac{F(S)}{H(S)}$, they select $p= 10 \%$ of examples drawn from $F$ with smallest estimated score of $f(x)/h(x)$.}


\begin{algorithm}[tb]
    \caption{ReMPE-2 \citep{yao2022rethinking}}
    \label{alg:ReMPE-2}
 \begin{algorithmic}[1]
    \STATE {\bfseries Input:} Distributions $F$ and $H$
    \STATE Obtain set  $B = \arg \min_{S \in \mathfrak{S}} \frac{F(S)}{H(S)}$
    \STATE Generate new distribution $H' = \frac{1}{1 + F(B)} \left( F_B + H  \right)$ 
    \STATE {\bfseries return} $\kappa(F|H')$
 \end{algorithmic}
\end{algorithm}

ReMPE-2 is fundamentally different from ReMPE-1 in that it uses a different way to construct $H'$. To be specific, when the irreducibility assumption holds, ReMPE-1 suggests regrouping nothing (because $G(B) = 0$), but ReMPE-2 still regroups a proportion $F(B)$ from $F$ to $H$. Therefore, Theorem \ref{thm:regroup} does not apply to it. \citet{yao2022rethinking} did not analyze ReMPE-2, but the next result shows that it has a negative bias under irreducibility.
\begin{proposition}
    For $\kappa(F|H')$ obtained from ReMPE-2:
    \begin{align*}
        \kappa(F|H') < \kappa(F|H).
    \end{align*}
    \label{prop:re_under}
\end{proposition}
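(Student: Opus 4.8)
The plan is to bound $\kappa(F|H')$ from above by evaluating, at a single well-chosen test set, the likelihood-ratio infimum that defines it, and to show that this one value already lies strictly below $\kappa(F|H)$. The natural test set is $B$ itself. I would use two facts. First, since ReMPE-2 takes $B = \arg \min_{S \in \mathfrak{S}} F(S)/H(S)$, Proposition~\ref{prop:inf} together with the standing assumption that this infimum is attained gives $F(B)/H(B) = \kappa(F|H)$, i.e.\ $F(B) = \kappa(F|H)\,H(B)$, and in particular $H(B) > 0$. Second, writing $H'$ as a measure, $H'(S) = \bigl(F(S \cap B) + H(S)\bigr)/\bigl(1 + F(B)\bigr)$ for all $S \in \mathfrak{S}$, so that $H'(B) = \bigl(F(B) + H(B)\bigr)/\bigl(1 + F(B)\bigr) > 0$; hence $B$ is an admissible competitor in the representation $\kappa(F|H') = \inf_{S : H'(S) > 0} F(S)/H'(S)$ obtained from Proposition~\ref{prop:inf} applied to the pair $(F, H')$.

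Then it is a short computation. Plugging $S = B$ into that infimum,
\[
    \kappa(F|H') \;\le\; \frac{F(B)}{H'(B)} \;=\; \frac{F(B)\bigl(1 + F(B)\bigr)}{F(B) + H(B)} \;=\; \kappa(F|H)\cdot\frac{1 + F(B)}{1 + \kappa(F|H)},
\]
where the last step substitutes $F(B) = \kappa(F|H)\,H(B)$ to write $F(B) + H(B) = H(B)\bigl(1 + \kappa(F|H)\bigr)$ and cancels $H(B)$. Provided $\kappa(F|H) > 0$ and $H(B) < 1$, we get $F(B) = \kappa(F|H)\,H(B) < \kappa(F|H)$, hence $\frac{1 + F(B)}{1 + \kappa(F|H)} < 1$, and therefore $\kappa(F|H') \le \frac{F(B)}{H'(B)} < \kappa(F|H)$, which is the claim.

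The delicate point — and the only real obstacle — is these two non-degeneracy conditions. The condition $\kappa(F|H) > 0$ is equivalent to $\kappa^* > 0$ and so holds in any non-trivial instance. The condition $H(B) < 1$ is the substantive one: if $H(B) = 1$ then $B$ carries all the mass of $H$, which forces $f/h = \kappa(F|H)$ $H$-almost everywhere and $H' = H$, a degenerate configuration in which ReMPE-2 acts as the identity; I would exclude this case, noting that it does not occur in the practical implementation, where $B$ is a strict sublevel set of the estimated ratio $f/h$ and so $H(B) < 1$.
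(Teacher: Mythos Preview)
Your proof is correct and follows essentially the same approach as the paper: both evaluate the ratio $F(S)/H'(S)$ at the test set $S = B$ and compare to $F(B)/H(B) = \kappa(F|H)$. The paper is more direct, simply observing that $H'(B) > H(B)$ (which, as you implicitly note, requires $H(B) < 1$ and $F(B) > 0$) and hence $F(B)/H'(B) < F(B)/H(B)$; your rewriting as $\kappa(F|H)\cdot\frac{1+F(B)}{1+\kappa(F|H)}$ is equivalent but a bit more circuitous. One minor slip: $\kappa(F|H) > 0$ is not \emph{equivalent} to $\kappa^* > 0$ in general, since $\kappa(F|H) = \kappa^* + (1-\kappa^*)\kappa(G|H)$ can be positive even when $\kappa^* = 0$; the implication $\kappa^* > 0 \Rightarrow \kappa(F|H) > 0$ suffices for your purpose, though. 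Your explicit handling of the non-degeneracy conditions is more careful than the paper's, which leaves them implicit.
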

\vskip -0.3in
Thus, if irreducibility holds, then ReMPE-2 returns $\kappa(F|H') < \kappa(F|H) = \kappa^*$, which is undesirable.
However, when irreducibility does not hold, ReMPE-2 may lead to a smaller asymptotic bias than estimating $\kappa(F|H)$, which could explain why the authors observe empirical improvements in their results. Our theoretical analysis of ReMPE-2 is supported experimentally in Sec. \ref{sec:experiment} and Appendix \ref{app:exp_irred}.

To summarize, \citet{yao2022rethinking} proposed a regrouping approach that was the first attempt to tackle the problem of MPE beyond irreducibility and motivated our work. ReMPE-1 recovers $\kappa^*$ when irreducibility holds (although in this case it is not doing regrouping), and decreases bias when irreducibility does not hold. The more practical algorithm ReMPE-2 might decrease the bias
when irreducibility does not hold, but it has a negative bias when irreducibility does hold.
Like ReMPE-1, SuMPE draws on some additional information beyond $F$ and $H$. Both meta-algorithms do not increase the bias, and recover $\kappa^*$ when irreducibility holds. Unlike ReMPE-1, however, SuMPE is able to recover $\kappa^*$ under a more general condition. Furthermore, our practical implementations of subsampling are based directly on Theorem \ref{thm:subsample_theory}, unlike ReMPE-2 which does not have the desireable theoretical properties of ReMPE-1. Finally, as we argue in the next section, SuMPE offers significant empirical performance gains.

One limitation of our SuMPE framework is that some knowledge of $P(Y|X)$ is needed and that  $\alpha(x)$ may need to be developed specifically for different applications. 




\section{Experiments}
\label{sec:experiment}
We ran our algorithm on nuclear, synthetic and some benchmark datasets taken from the UCI machine learning repository and MNIST, corresponding to all three scenarios described in Sec. \ref{subsec:prac}. 
We take four MPE algorithms: DPL \citep{ivanov2020dedpul}, EN \citep{elkan2008learning}, KM \citep{ramaswamy2016mixture} and TIcE \citep{bekker2018estimating}.
We compare the original version of these methods together with their regrouping (Re-$(\cdot$)) and subsampling (Su-$(\cdot$)) version. All experiments in this section consider settings where irreducibility is violated. The summarized results are shown in Table \ref{table:summary_err} and the detailed results are in Appendix \ref{sec:more_experiment}. Overall, the subsampling version of each MPE method outperforms the original and regrouping version.
Additional experiments where irreducibility holds are offered in Appendix \ref{app:exp_irred}, 
where we find that ReMPE harms the estimation performance while SuMPE does not. 
The implementation is available at \url{https://github.com/allan-z/SuMPE}.

\begin{table*}[h!]

    \caption{ Summarized table of 
    average absolute estimation error, corresponding to testing cases in Sec. \ref{sec:experiment}. 
    Several state-of-the-art MPE algorithms DPL, EN, KM and TIcE are selected. The mean absolute error ($\avg[\abs{\widehat{\kappa}^* - \kappa^*}]$) is reported, the smallest error among original, regrouping and subsampling version is bolded. $(+/-)$ denotes that on average the estimator produces positive$/$negative estimation bias ($\sgn[\avg(\widehat{\kappa}^* - \kappa^*)]$). 
    }
    \label{table:summary_err}
    \vskip 0.15in
    \centering
    \resizebox{\textwidth}{!}{
    \begin{tabular}{ | c | c | c c c | c c c|c c c| c c c | }
    \hline
     Setup & Dataset & DPL & ReDPL & SuDPL & EN & ReEN & SuEN & KM & ReKM & SuKM & TIcE & ReTIcE & SuTIcE \\ \hline
     Unfolding & Gamma Ray & $0.045+$ & $0.117-$ & $\bm{0.013+}$ & $0.034+$ & $0.118-$ & $\bm{0.027-}$ & $0.163+$ & $0.076+$ & $\bm{0.042+}$ & $0.095+$ & $0.061+$ & $\bm{0.019+}$ \\ \hline 
     \multirow{5}{4em}{Domain Adaptation} & Synthetic & $0.060+$ & $0.053+$ & $\bm{0.028-}$ & $\bm{0.045-}$ & $0.061-$ & $0.067-$ & $0.063+$ & $0.059+$ & $\bm{0.022-}$ & $0.128+$ & $0.094+$ & $\bm{0.041+}$ \\ 
      & Mushroom & $0.047+$ & $0.081-$ & $\bm{0.033+}$ & $0.122+$ & $\bm{0.078+}$ & $0.101+$ & $0.067-$ & $0.134-$ & $\bm{0.059-}$ & $0.060+$ & $0.078-$ & $\bm{0.036+}$\\ 
      & Landsat & $0.046+$ & $0.042-$ & $\bm{0.017-}$ & $0.141+$ & $0.110+$ & $\bm{0.085+}$ & $0.046+$ & $0.029+$ & $\bm{0.016-}$ & $0.043+$ & $0.044-$ & $\bm{0.035-}$ \\ 
      & Shuttle & $0.037+$ & $0.138-$ & $\bm{0.015-}$ & $0.090+$ & $0.071+$ & $\bm{0.046+}$ & $0.036+$ & $0.111-$ & $\bm{0.032-}$ & $0.080+$ & $0.086-$ & $\bm{0.038+}$ \\ 
      & MNIST17 & $0.047+$ & $0.085-$ & $\bm{0.028-}$ & $0.231+$ & $0.175+$ & $\bm{0.166+}$ & $0.041+$ & $0.063-$ & $\bm{0.017-}$ & $0.090+$ & $0.073-$ & $\bm{0.048+}$ \\ \hline
     \multirow{4}{4.5em}{Selected/ \\ Reported at Random} & Mushroom & $0.047+$ & $0.095-$ & $\bm{0.027+}$ & $0.119+$ & $0.075+$ & $\bm{0.074+}$ & $0.072-$ & $0.134-$ & $\bm{0.064-}$ & $0.047+$ & $0.066-$ & $\bm{0.044-}$\\ 
      & Landsat & $0.048+$ & $0.057-$ & $\bm{0.019-}$ & $0.142+$ & $0.108+$ & $\bm{0.092+}$ & $0.046+$ & $0.033+$ & $\bm{0.018+}$ & $0.046+$ & $0.053-$ & $\bm{0.041-}$ \\ 
      & Shuttle & $0.035+$ & $0.144-$ & $\bm{0.013-}$ & $0.095+$ & $0.073+$ & $\bm{0.056+}$ & $\bm{0.042+}$ & $0.129-$ & $0.044-$ & $0.079+$ & $0.089-$ & $\bm{0.050+}$ \\ 
      & MNIST17 & $0.047+$ & $0.088-$ & $\bm{0.027+}$ & $0.240+$ & $0.173+$ & $\bm{0.164+}$ & $0.038+$ & $0.064-$ & $\bm{0.022+}$ & $0.096+$ & $0.073+$ & $\bm{0.057+}$ \\ \hline
    \end{tabular}
    }
    
    \vskip -0.1in
    
\end{table*}


\subsection{Unfolding: Gamma Ray Spectra Data}
The gamma ray spectra data are simulated from the Monte-Carlo N-Particle (MCNP) radiation transport code \citep{werner2018mcnp6}. $H$ refers to the distribution of Cesium-137. $G$ is the background distribution, consisting of terrestrial background and Cobalt-60. The goal is to estimate the proportion of Cesium-137 in the measured spectrum $F$.
\begin{align*}
    H & \sim p(x|\text{Cesium}) \\
    G & \sim  0.8 \cdot p(x|\text{Cobalt}) + 0.2 \cdot p(x|\text{terrestrial background}) \\
    F & = (1-\kappa^*) G + \kappa^* H.
\end{align*}
Sample sizes of $m = n = 50,000$ were chosen, which is a reasonable number of counts for many nuclear detection applications. The true mixture proportion $\kappa^*$ is varied in $\left\{0.1, 0.25, 0.5, 0.75 \right\}$. The random variable $x$ represents quantized energy, which is one-dimensional and is discrete-valued. Therefore, we directly use the histogram as the estimate of the distribution. We choose the acceptance function $\alpha(x)$ according to the methodology developed in Sec. \ref{sec:unfolding} and Appendix \ref{app:unfolding_detail}.


Three of the four baseline MPE methods (DPL, EN, KM) did not work well out-of-the-box in this setting. We therefore re-implemented these methods to explicitly leverage the histogram representation of the probability distributions. This also greatly sped up the KM approach. The results are summarized in Table \ref{table:gamma_err}.


\subsection{Domain Adaptation: Synthetic Data}
Following the setup in Sec. \ref{sec:cpe_da}, we specify the target conditional and marginal distributions $H$, $G$ and $F$  as: 
\begin{align*}
    H & \sim \mathcal{N} (\mu_1 = 0, \sigma_1 = 1) \\
    G & \sim 0.8 \cdot \mathcal{N} (\mu_2 = 3, \sigma_2 = 2) + 0.2 \cdot \mathcal{N} (\mu_3 = 4, \sigma_3 = 1) \\
    F & = (1-\kappa^*) G + \kappa^* H.
\end{align*}
where $G$ is not irreducible w.r.t. $H$ because it contains a Gaussian distribution with a bigger variance than $H$. We draw $m=n=1000$ instances from both $H$ and $F$. The true mixture proportion $\kappa^*$ is varied in $\left\{0.1, 0.25, 0.5, 0.75 \right\}$. 

In addition, we draw  $4000$ labeled instances from the source distribution, where $\mu_3$ is changed to $5$. We then truncate the source distribution (and therefore the source sample) to $(- \infty, 2]$. The resulting source and target distribution satisfy the CSPL assumption. 
A $1$ hidden layer neural network with $16$ neurons was trained to predict $P^{sr}(Y=1|X=x)$ for $ x \in (-\infty, 2]$, thus $A = (-\infty, 2]$ and $\alpha(x)$ was chosen according to Eqn. \eqref{eq:rejFunc_cs}. This procedure was repeated $10$ times with different random seeds. Detailed results are shown in Table \ref{table:da_syn_err}.

\subsection{Domain Adaptation: Benchmark Data}
In many machine learning datasets for classification (e.g., those in UCI), irreducibility is satisfied. 
\footnote{The datasets Mushroom, Landsat, Shuttle and MNIST were chosen for our study because previous empirical research \citep{ivanov2020dedpul} showed that the baseline MPE methods perform well on these datasets when the irreducibility assumption holds. Our paper focuses on how to eliminate the estimation bias that arises from the violation of irreducibility. Therefore, we chose to use datasets where the baseline methods perform well, in order to clearly observe and measure the bias introduced by MPE methods when irreducibility is not met.} 
Here we manually create datasets that violate irreducibility by uniformly sampling out $90 \%$ (or $80 \%$) of the original positive data as the target positive data, with all the remaining data treated as target negative. The target conditional and marginal distributions $H$, $G$ and $F$ are specified as: 
\begin{align*}
    H & \sim p(x|Y=1) \\
    G & \sim \gamma p(x|Y=1) + (1-\gamma) p(x|Y=0) \\
    F & = (1-\kappa^*) G + \kappa^* H.
\end{align*}
We draw $m = 1000$ instances from $H$ and $n \in \left[1000, 4000\right]$ instances from $F$ (the exact number varies by datasets and is based on number of examples available, see the code provided). The true mixture proportion $\kappa^*$ is varied in $\left\{0.1, 0.25, 0.5, 0.75 \right\}$. The proportion $\gamma$ is determined by the total number of positive and negative data originally available in the dataset, therefore varies case by case. 

In addition, we obtain labeled instances following the source distribution, by drawing  $\kappa^* n$ data from the target positive and $0.95 (1-\kappa^*) n$ data from the target negative distribution. This causes a prior shift 
that simulates CSPL.

A $2$ hidden layer neural network with $512$ neurons was trained to predict $P^{sr}(Y=1|X=x)$. For real-world high-dimensional data, it is hard to know the support. Therefore, we choose $A = \{x: \widehat{P}^{sr}(Y=1|X=x) > 0.5 \}$,
because an example $x$ with high $\widehat{P}^{sr}(Y=1|X=x)$ is more likely to lie in the support of $H$.
The acceptance function $\alpha(x)$ was determined according to Eqn. \eqref{eq:rejFunc_cs}. The above procedure was repeated $10$ times with different random seeds. Table \ref{table:da_bench_err} summarizes the results.

\subsection{Selected/Reported at Random: Benchmark Data}
Recalling the setting of Sec. \ref{sec:rar}, there is a jointly distributed triple $(X,Y,Z)$, where $Y$ indicates whether a condition is reported, and $Z$ indicates whether the condition is actually present. For the experiments below, the data from $F, G$, and $H$ are generated in the same way as the target distribution of the previous subsection. Instead of observing labeled source data, however, in this subsection we instead observe $\kappa^* n$ instances from $p(x|Z=1)$, together with their labels $Y$, matching the setup of Sec. \ref{sec:rar}.

A $2$ hidden layer neural network with $512$ neurons was trained to predict $e(x) = P(Y=1|X = x, Z=1)$. 
We chose $A = \{x: \widehat{e}(x) > 0.6 \}$ 
\footnote{
$A$ needs to be a subset of $\supp(H)$. Note that in population level, $h(x) = p(x|Y=1) \propto e(x) \cdot p(x|Z=1) $, thus $ \{x: e(x) > 0.6 \} \subseteq \{x: e(x) > 0 \} = \{x: h(x) > 0 \}$. (The last equality holds because $e(x) = P(Y = 1|X=x, Z=1)$.) Here we replace $e(x)$ with $\widehat{e}(x)$.
}
and $\alpha(x)$ according to Eqn. \eqref{eq:rejFunc_underreport}. The above procedure was repeated $10$ times with different random seeds. The results are shown in Table \ref{table:rar_bench_err}.

\section{Conclusion}
This work introduces a more general identifiability condition than irreducibility for mixture proportion estimation. 
We also propose a subsampling-based framework that achieves bias reduction/elimination for baseline MPE algorithms.
Theoretically, our work expands the scope of settings where MPE can be solved. Practically, we illustrate three scenarios where irreducibility fails, and our meta-algorithm successfully improves upon baseline MPE methods.

\section*{Acknowledgements}
This work was supported in part by the National Science Foundation under award 2008074 and the  Department of Defense, Defense Threat Reduction Agency under award HDTRA1-20-2-0002.




\bibliography{refs}
\bibliographystyle{icml2023}

\newpage
\appendix
\onecolumn

\section{Proofs}
\subsection{Proof of Proposition \ref{prop:post}}
\begin{proposition*}
    Under the latent label model, 
    \begin{equation*}
        \esssup{x} P(Y=1|X=x) = \frac{\kappa^*}{\kappa(F|H)} \ ,
    \end{equation*}
    where $0/0 := 0$.
\end{proposition*}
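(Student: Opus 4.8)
The plan is to reduce everything to the explicit posterior formula in Eqn.~\eqref{eq:cond} together with the likelihood-ratio characterization of $\kappa(F|H)$ in Proposition~\ref{prop:inf}. First I would dispose of the degenerate case $\kappa^*=0$: then $P(Y=1)=0$ forces $P(Y=1|X=x)=0$ for $F$-a.e.\ $x$, so the left side is $0$, while the right side is $0/\kappa(F|H)$, which equals $0$ under the stated convention $0/0:=0$. So assume $\kappa^*>0$ from now on.

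The key structural observation is that, choosing the version $f=(1-\kappa^*)g+\kappa^* h$, we have $f\ge \kappa^* h$ pointwise; hence on $E_H=\{x:h(x)>0\}$ we get $f(x)>0$ and $f(x)/h(x)\ge\kappa^*>0$. Two consequences follow: (i) by Proposition~\ref{prop:inf}, $\kappa(F|H)=\essinf{x\in E_H}\frac{f(x)}{h(x)}\ge\kappa^*>0$, so the right-hand side is a well-defined finite number; and (ii) on $E_H$ the measures $\mu$, $H$ and $F$ are mutually absolutely continuous (there $0<\kappa^* h\le f$ and $0<h$), so ``essential sup/inf over a measurable subset of $E_H$'' is unambiguous, whichever of these reference measures one uses.

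Next I would rewrite the left-hand side. By Eqn.~\eqref{eq:cond}, $P(Y=1|X=x)=\kappa^* h(x)/f(x)$ for $F$-a.e.\ $x$ (since $F(\{f=0\})=0$); this quantity is $0$ wherever $h(x)=0$ and strictly positive on $E_H$. Because $F(E_H)\ge\kappa^* H(E_H)=\kappa^*>0$, the essential supremum is governed entirely by $E_H$, giving
\[
\esssup{x} P(Y=1|X=x)=\kappa^*\,\esssup{x\in E_H}\frac{h(x)}{f(x)}.
\]
It then remains to invoke the elementary identity $\esssup{}(1/\varphi)=1/\essinf{}\varphi$ for a measurable $\varphi$ bounded below by a positive constant, applied to $\varphi=f/h$ on $E_H$ (bounded below by $\kappa^*$): one inequality comes from $\varphi\ge \essinf{}\varphi$ a.e., the other from the fact that $\{\varphi<\essinf{}\varphi+\epsilon\}$ has positive measure for every $\epsilon>0$. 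Combined with Proposition~\ref{prop:inf} this gives $\esssup{x\in E_H} h(x)/f(x)=1/\kappa(F|H)$, and multiplying by $\kappa^*$ yields the claim.

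The main obstacle is bookkeeping rather than depth: being careful that the essential supremum on the left (naturally taken with respect to $P_X=F$) and the essential infimum in Proposition~\ref{prop:inf} (taken over $E_H$) refer to compatible null-set systems, which is exactly what observation (ii) secures; and making sure the degenerate cases ($\kappa^*=0$, and the possibility that $F$ places mass outside $E_H$) are handled so that the supremum is not accidentally attained off $E_H$.
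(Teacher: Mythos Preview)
Your proposal is correct and follows essentially the same route as the paper: split off the degenerate case, then on $E_H$ write the posterior as $\kappa^* h/f$ and convert the essential supremum of $h/f$ into the reciprocal of the essential infimum of $f/h$ via Proposition~\ref{prop:inf}. Your version is in fact more careful than the paper's on two technical points the paper glosses over---the reciprocal identity $\esssup(1/\varphi)=1/\essinf\varphi$ and the compatibility of null-set systems between $F$, $H$ and $\mu$ on $E_H$---but these are refinements of the same argument, not a different approach.
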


\begin{proof}
    First consider the trivial case when $\kappa(F|H) = 0$, then from the fact that $\kappa^* \leq \kappa(F|H)$, $\kappa^*$ must also be $0$. According to Eqn. \eqref{eq:cond}, $P(Y=1|X=x) = 0 \ \forall x$, therefore the equality holds.
    
    Then for $\kappa(F|H) > 0$, we consider the cases $\kappa^* > 0$ and $\kappa^*=0$ separately. For the first case, consider two subcases: $h(x) > 0$ and $h(x) = 0$. In the first subcase, we have that $f(x) > 0$, and therefore by the definition of conditional probability,
    \begin{align*}
        P(Y=1|X=x) = \frac{\kappa^*}{\frac{f(x)}{h(x)}}.
    \end{align*}
    Taking the essential supremum over all $x$ with $h(x) > 0$,
    \begin{align*}
        \esssup{x: h(x) > 0} \ P(Y=1|X=x) & = \frac{\kappa^*}{ \essinf{x: h(x) > 0} \frac{f(x)}{h(x)}} \\ 
        & = \frac{\kappa^*}{\kappa(F|H)},
    \end{align*}
    where the second equality follows from Proposition \ref{prop:inf}. In the second subcase, $P(Y=1|X=x)$ is zero. Therefore, 
    \begin{align*}
        \esssup{x}  P(Y=1|X=x) &= \esssup{x: h(x) > 0}  P(Y=1|X=x) \\
        &= \frac{\kappa^*}{\kappa(F|H)}.
    \end{align*}

    When $\kappa^* = 0$, $P(Y=1|X=x) = 0$ for all $x$, and the equality still holds.
\end{proof}

\subsection{Proof of Theorem \ref{thm:identify}}
\begin{theorem*}[Identifiability Under Local Supremal Posterior (LSP)]
    Let $A$ be any non-empty measurable subset of $ E_H = \{x: h(x) > 0 \} $ and $ s = \esssup{x \in A}  P(Y=1|X=x) $, then
    \begin{align*}
        \kappa^* = s \cdot \inf_{S \subseteq A} \frac{F(S)}{H(S)} 
                 = s \cdot \essinf{x \in A} \frac{f(x)}{h(x)}.
    \end{align*}
\end{theorem*}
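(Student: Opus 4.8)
The plan is to reduce the claim to Proposition~\ref{prop:inf} applied to the restricted problem on the set $A$, using the explicit form of the posterior in Eqn.~\eqref{eq:cond}. First I would dispose of the degenerate case $s = 0$: if $\esssup{x \in A} P(Y=1|X=x) = 0$, then by Eqn.~\eqref{eq:cond} we have $\kappa^* h(x)/f(x) = 0$ for $\mu$-a.e.\ $x \in A$; since $h(x) > 0$ on $A$ (as $A \subseteq E_H$) and $f(x) \geq \kappa^* h(x) > 0$ whenever $\kappa^* > 0$, this forces $\kappa^* = 0$, and then both sides of the claimed identity are $0$ (recall the convention $0/0 := 0$ and that the infimum of a ratio of nonnegative quantities is nonnegative). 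So the substantive case is $s > 0$, which in particular requires $\kappa^* > 0$.

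\textbf{Main argument.} Assume $s > 0$, hence $\kappa^* > 0$. On $A \subseteq E_H$ we have $h(x) > 0$, so $f(x) \geq (1-\kappa^*)g(x) + \kappa^* h(x) > 0$, and Eqn.~\eqref{eq:cond} gives $P(Y=1|X=x) = \kappa^* h(x)/f(x)$ for all $x \in A$. Taking the essential supremum over $x \in A$ and pulling out the constant $\kappa^*$,
\begin{align*}
    s = \esssup{x \in A} \frac{\kappa^* h(x)}{f(x)} = \kappa^* \cdot \esssup{x \in A} \frac{h(x)}{f(x)} = \frac{\kappa^*}{\essinf{x \in A} \frac{f(x)}{h(x)}},
\end{align*}
where the last step uses that $\esssup$ of a positive reciprocal is the reciprocal of the $\essinf$ (valid because $\kappa^* > 0$ rules out the $\essinf$ being $+\infty$ in a way that breaks the identity; one checks the edge cases of the $\essinf$ being $0$ or $+\infty$ directly against the convention). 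Rearranging yields $\kappa^* = s \cdot \essinf{x \in A} \frac{f(x)}{h(x)}$, which is the second equality in the statement. For the first equality, I would invoke Proposition~\ref{prop:inf} (the set-infimum form of $\kappa(\cdot|\cdot)$) applied to the measurable space restricted to $A$: the same proof shows $\inf_{S \subseteq A : H(S) > 0} \frac{F(S)}{H(S)} = \essinf{x \in A} \frac{f(x)}{h(x)}$, since that proposition's argument is entirely local and only uses that $h > 0$ on the region of interest. Combining the two displayed identities gives the full chain.

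\textbf{Anticipated obstacle.} The main delicate point is bookkeeping around the conventions and degenerate values: handling $s = 0$ versus $\kappa^* = 0$, the convention $0/0 := 0$, and making sure the reciprocal identity $\esssup (1/\varphi) = 1/\essinf \varphi$ is applied only where it is legitimate (e.g.\ when $\essinf{x \in A} f(x)/h(x)$ could in principle be $0$, forcing $\kappa^* = 0$ and contradicting $s > 0$, so that case cannot arise). None of this is deep, but it needs to be written carefully. A secondary point is justifying that Proposition~\ref{prop:inf} transfers verbatim to the trace $\sigma$-algebra on $A$; this is immediate from inspecting its proof but should be stated. Everything else is direct substitution of $f = (1-\kappa^*)g + \kappa^* h$ and Eqn.~\eqref{eq:cond}.
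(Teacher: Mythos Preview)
Your proposal is correct and follows essentially the same route as the paper: substitute the posterior formula $P(Y=1|X=x) = \kappa^* h(x)/f(x)$ on $A$, take the essential supremum, invert to obtain $\kappa^* = s \cdot \essinf{x \in A} f(x)/h(x)$, and equate the density-infimum with the set-infimum via (the localized form of) Proposition~\ref{prop:inf}. The only cosmetic difference is that the paper splits cases on $\kappa^*$ rather than on $s$, but since $s=0 \iff \kappa^*=0$ these are the same partition, and your treatment of the degenerate case is in fact slightly more detailed than the paper's.
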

\begin{proof} Consider the case of $\kappa^* > 0$ and $\kappa^* = 0$ separately. 

    If $ \kappa^* > 0$, then $ E_H = \{x: h(x) > 0 \} \subseteq E_F = \{x: f(x) > 0 \} $. Recall from Eqn. \eqref{eq:cond},
    \begin{align*}
        P(Y=1|X=x) = \kappa^* \cdot \frac{h(x)}{f(x)} \quad \text{when } f(x) > 0.
    \end{align*}
    Taking the essential supremum over $A$ and recall the definition of $s$,
    \begin{align*}
        s = \esssup{x \in A}  P(Y=1|X=x) & = \kappa^* \cdot  \esssup{x \in A} \frac{h(x)}{f(x)}.
    \end{align*}
    Since $A \subseteq E_H \subseteq E_F $, $f(x)$ and $h(x)$ are both positive for $ x \in A$. Rearrange the denominator 
    \begin{align*}
        s = \kappa^* \cdot \frac{1}{\essinf{x \in A} \frac{f(x)}{h(x)}},
    \end{align*}
    take the denominator to the other side, we get
    \begin{align*}
        \kappa^* = s \cdot \essinf{x \in A} \frac{f(x)}{h(x)} = s \cdot \inf_{S \subseteq A} \frac{F(S)}{H(S)},
    \end{align*}
    where the second equality follows from the identity that $\essinf{x \in A} \frac{f(x)}{h(x)} = \inf_{S \subseteq A}  \frac{F(S)}{H(S)}$.

    If $ \kappa^* = 0$, then $s = 0$, the above equality still holds.
\end{proof}

\subsection{Proof of Theorem \ref{thm:subsample_theory}}
\begin{theorem*}[Identifiability Under Tight Posterior Upper Bound]
Consider any non-empty measurable set $ A \subseteq E_H = \{x: h(x) > 0 \} $, and let $ s = \esssup{x \in A}  P(Y=1|X=x)$. Let $\alpha(x)$ be any measurable function satisfying
    \begin{equation}
        \alpha(x) \in \begin{cases}
                    \left[ P(Y=1|X=x), s \right], & x \in A, \\
                    \left[ P(Y=1|X=x), 1\right], & \text{o.w.}
                \end{cases} 
    \label{eq:alpha_x_app}
    \end{equation}
    Define a new distribution $\widetilde{F}$ in terms of its density
    \begin{equation}
        \begin{split}
            & \widetilde{f}(x) = \frac{1}{c} \cdot \alpha(x) \cdot f(x), \\
            \text{ where } \quad & c = \int \alpha(x) f(x) dx = \expect{\alpha(x)}{X \sim F}.
        \end{split}
        \label{eq:f_tilde_app}
    \end{equation} 
    Then 
    \begin{align*}
        \kappa^* = c \cdot \kappa(\widetilde{F}|H).
    \end{align*}
\end{theorem*}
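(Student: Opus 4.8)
The plan is to reduce Theorem~\ref{thm:subsample_theory} to Theorem~\ref{thm:identify} (the LSP identifiability theorem) by showing that the subsampled distribution $\widetilde{F}$ is precisely the mixture whose posterior on the set $A$ has supremum exactly $1$, scaled by $c$. First I would compute the posterior probability of $Y=1$ under the joint model after subsampling. Subsampling with acceptance function $\alpha(x)$ reweights the marginal $f(x)$ to $\widetilde f(x) = \alpha(x) f(x)/c$, but it also reweights the joint density of $(X,Y)$: the new joint density of $(x, y=1)$ is $\alpha(x)\kappa^* h(x)/c$ and of $(x, y=0)$ is $\alpha(x)(1-\kappa^*)g(x)/c$. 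Hence the posterior is unchanged pointwise, $\widetilde P(Y=1|X=x) = P(Y=1|X=x)$ wherever $f(x)>0$; what changes is only the marginal weight assigned to each $x$. Actually I need to be slightly more careful: $\widetilde F$ as defined is just a distribution on $\mathcal X$, so the cleanest route is to exhibit $\widetilde F$ directly as a mixture $\widetilde F = (1-\widetilde\kappa)\widetilde G + \widetilde\kappa H$ for a suitable $\widetilde\kappa$ and $\widetilde G$, and then apply the earlier machinery.

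The key algebraic step is the decomposition. Write $f(x) = (1-\kappa^*)g(x) + \kappa^* h(x)$, so
\begin{align*}
    c\,\widetilde f(x) = \alpha(x) f(x) = \alpha(x)(1-\kappa^*)g(x) + \alpha(x)\kappa^* h(x).
\end{align*}
I want to pull out a clean multiple of $h(x)$. The natural candidate for the coefficient of $h$ in $c\widetilde f$ is $\kappa^*$ itself: I would show $c\widetilde f(x) - \kappa^* h(x) \ge 0$ for a.e.\ $x$, which is exactly the statement that $\alpha(x) f(x) \ge \kappa^* h(x)$, i.e.\ $\alpha(x) \ge \kappa^* h(x)/f(x) = P(Y=1|X=x)$ on $E_H$ (and trivially elsewhere since the right side is $0$ off $E_F$, and $E_H\subseteq E_F$ when $\kappa^*>0$) — this is precisely the lower-bound half of the hypothesis~\eqref{eq:alpha_x_app}. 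Then $\widetilde G$ defined by $\widetilde g(x) = (c\widetilde f(x) - \kappa^* h(x))/(c - \kappa^*)$ is a genuine density (it integrates to $1$ since $\int c\widetilde f = c$), and we get $\widetilde F = (1 - \kappa^*/c)\widetilde G + (\kappa^*/c) H$. So by Proposition~\ref{prop:inf}/Eqn.~\eqref{eq:reducible}, $\kappa(\widetilde F|H) = \kappa^*/c + (1-\kappa^*/c)\kappa(\widetilde G|H)$, and it remains to show $\kappa(\widetilde G|H) = 0$, i.e.\ $\widetilde G$ is irreducible w.r.t.\ $H$.

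For the irreducibility of $\widetilde G$, I would use the upper-bound half of the hypothesis: on $A$, $\alpha(x)\le s$. Compute $\essinf_{x\in A} \widetilde g(x)/h(x)$. Since $(c-\kappa^*)\widetilde g(x)/h(x) = \alpha(x) f(x)/h(x) - \kappa^*$ and $\alpha(x)\le s$ on $A$ while $\essinf_{x\in A} f(x)/h(x)$ equals $\kappa^*/s$ by Theorem~\ref{thm:identify}, I expect $\essinf_{x\in A}\bigl(\alpha(x)f(x)/h(x) - \kappa^*\bigr) \le s\cdot(\kappa^*/s) - \kappa^* = 0$; combined with nonnegativity this forces the essential infimum over $A$, hence over all of $E_H$, to be $0$, so $\kappa(\widetilde G|H)=0$. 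The main obstacle — and the step deserving the most care — is handling the degenerate cases $\kappa^*=0$ (where $s=0$, $c$ could still be positive, and one checks $\kappa(\widetilde F|H)=0$ directly) and $c=\kappa^*$ (forcing $\widetilde G$ to be vacuous, meaning $\widetilde F = H$ and $\kappa(\widetilde F|H)=1=\kappa^*/c$), together with the measure-theoretic bookkeeping of essential infima/suprema and the inequality $\alpha(x)f(x)\ge\kappa^* h(x)$ holding only almost everywhere rather than pointwise. I would also double-check the subtle point that $\essinf_{x\in A} f(x)/h(x) = \kappa^*/s$ requires $s>0$, which is fine in the $\kappa^*>0$ branch.
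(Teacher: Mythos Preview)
Your proposal is correct, but it takes a genuinely different route from the paper's proof. The paper argues directly on the quantity $c\cdot\kappa(\widetilde F|H) = \essinf_{x\in E_H}\alpha(x)f(x)/h(x)$ and sandwiches it: the lower bound $\alpha(x)\ge P(Y=1|X=x)$ gives $\essinf_{E_H}\alpha(x)f(x)/h(x)\ge\essinf_{E_H}\kappa^* = \kappa^*$, while restricting to $A$ and using $\alpha(x)\le s$ gives $\essinf_{E_H}\alpha(x)f(x)/h(x)\le s\cdot\essinf_{A}f(x)/h(x) = \kappa^*$ via Theorem~\ref{thm:identify}. No mixture decomposition of $\widetilde F$, no auxiliary $\widetilde G$, no irreducibility argument is needed.

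Your approach instead explicitly realizes $\widetilde F = (1-\kappa^*/c)\widetilde G + (\kappa^*/c)H$ and proves $\widetilde G$ is irreducible with respect to $H$. This is longer and requires the extra case split you flag ($c=\kappa^*$, $\kappa^*=0$), but it has real value: it makes transparent \emph{why} subsampling works, namely that it discards exactly enough mass from $G$ to leave an irreducible residual $\widetilde G$. The paper in fact presents this picture separately as ``intuition'' (Appendix~\ref{app:sub_better} and Proposition~\ref{prop:sub_better}) without using it for the formal proof; your argument effectively promotes that intuition to a self-contained proof. So the paper's route is shorter, yours is more structural.
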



\begin{proof}
    Write $\kappa(\widetilde{F}|H) $ explicitly according to Proposition \ref{prop:inf} and Eqn. \eqref{eq:f_tilde_app}, 
    \begin{align*}
        c \cdot \kappa(\widetilde{F}|H) 
        & = c \cdot \essinf{x: h(x) > 0} \frac{\widetilde{f}(x)}{h(x)} \quad \text{by definition of $\kappa(\widetilde{F}|H)$} \\ 
        & = c \cdot \essinf{x: h(x) > 0} \frac{ \frac{1}{c} \cdot \alpha(x) \cdot f(x) }{h(x)} \quad \text{plug in the expression of $\widetilde{f}(x)$} \\
        & = \essinf{x: h(x) > 0} \frac{\alpha(x) \cdot f(x)}{h(x)} .
    \end{align*}

    We will show that it equals to $\kappa^*$ by proving it is both upper and lower bound of $\kappa^*$.

    From Eqn. \eqref{eq:alpha_x_app}, we can conclude that  $\alpha(x) \geq P(Y=1|X=x) \ \ \forall x$, then
    \begin{align*}
        c \cdot \kappa(\widetilde{F}|H) & \geq \essinf{x: h(x) > 0} \frac{P(Y=1|X=x) \cdot f(x)}{h(x)}  \\
        & = \essinf{x: h(x) > 0} \kappa^* \quad \text{rearrange Eqn. \eqref{eq:cond}} \\
        & = \kappa^* .
    \end{align*}
    Meanwhile, consider $x \in A$
    \begin{align*}
        c \cdot \kappa(\widetilde{F}|H) 
        & = \essinf{x: h(x) > 0} \frac{\alpha(x) \cdot f(x)}{h(x)} \\
        & \leq \essinf{x \in A} \frac{\alpha(x) \cdot f(x)}{h(x)} \quad \text{replace $E_H$ by $A$} \\
        & \leq \essinf{x \in A} \frac{s \cdot f(x)}{h(x)} \quad \text{because $\alpha(x) \leq s, \forall x \in A$} \\
        & = \kappa^* \quad \text{by Theorem \ref{thm:identify}}
    \end{align*}
    This shows that $c \cdot \kappa(\widetilde{F}|H) = \kappa^*$.
\end{proof}

\subsection{Proof of Corollary \ref{corol:upp_bd}}
\begin{corollary*}
    Let $\alpha(x)$ be any measurable function with
    \begin{equation*}
         \alpha(x) \in [P(Y=1|X=x), 1] \quad \forall x.
    \end{equation*}
    Define a new distribution $\widetilde{F}$ in terms of its density $\widetilde{f}$, obtained by Eqn. \eqref{eq:f_tilde}.
    Then 
    \begin{align*}
        \kappa^* \leq c \cdot \kappa(\widetilde{F}|H) \leq \kappa(F|H).
    \end{align*}
\end{corollary*}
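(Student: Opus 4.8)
The plan is to reuse the structure of the proof of Theorem \ref{thm:subsample_theory}, but now that we only have the looser bound \eqref{eq:alpha_loose} we can only establish the two inequalities rather than the equality. First I would write out $c\cdot\kappa(\widetilde F|H)$ explicitly using Proposition \ref{prop:inf} and the definition \eqref{eq:f_tilde} of $\widetilde f$, exactly as in the proof of Theorem \ref{thm:subsample_theory}, to obtain
\[
c\cdot\kappa(\widetilde F|H) = \essinf{x: h(x)>0} \frac{\alpha(x)\,f(x)}{h(x)}.
\]

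For the lower bound $\kappa^* \le c\cdot\kappa(\widetilde F|H)$, I would invoke the left-hand side of \eqref{eq:alpha_loose}, namely $\alpha(x)\ge P(Y=1|X=x)$ for all $x$, so that the integrand dominates $\frac{P(Y=1|X=x)\,f(x)}{h(x)}$, which by rearranging \eqref{eq:cond} equals the constant $\kappa^*$ on $\{x: h(x)>0\}$ (note $\kappa^*>0$ forces $E_H\subseteq E_F$; the case $\kappa^*=0$ is trivial since then every quantity in the chain is $\ge 0$). Taking the essential infimum preserves the inequality, giving $c\cdot\kappa(\widetilde F|H)\ge\kappa^*$. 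This step is essentially identical to the first half of the proof of Theorem \ref{thm:subsample_theory} and needs no new idea.

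For the upper bound $c\cdot\kappa(\widetilde F|H)\le\kappa(F|H)$, I would use the right-hand side of \eqref{eq:alpha_loose}, $\alpha(x)\le 1$ for all $x$, so that $\frac{\alpha(x)\,f(x)}{h(x)}\le\frac{f(x)}{h(x)}$ pointwise on $\{x:h(x)>0\}$; taking essential infima and applying Proposition \ref{prop:inf} to identify $\essinf{x:h(x)>0}\frac{f(x)}{h(x)}=\kappa(F|H)$ finishes this direction. Combining the two chains yields $\kappa^*\le c\cdot\kappa(\widetilde F|H)\le\kappa(F|H)$.

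I do not anticipate a real obstacle here: the corollary is strictly weaker than Theorem \ref{thm:subsample_theory} in that it drops the tightness-on-$A$ hypothesis (which was the only ingredient needed for the matching upper bound $c\cdot\kappa(\widetilde F|H)\le\kappa^*$), and the two inequalities that survive follow directly from monotonicity of $\essinf$ under the two-sided pointwise bound on $\alpha$. The only minor care needed is the degenerate cases ($\kappa^*=0$, or $\kappa(F|H)=0$), which should be dispatched in a sentence each, and making sure the domains of the essential infima are handled consistently (everything is taken over $\{x:h(x)>0\}$, and $h(x)>0$ with $\kappa^*>0$ implies $f(x)>0$ so the ratios are well defined).
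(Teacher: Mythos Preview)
Your proposal is correct and matches the paper's proof essentially line for line: express $c\cdot\kappa(\widetilde F|H)$ as $\essinf{x:h(x)>0}\frac{\alpha(x)f(x)}{h(x)}$, then use $\alpha(x)\ge P(Y=1|X=x)$ together with \eqref{eq:cond} for the lower bound and $\alpha(x)\le 1$ together with Proposition~\ref{prop:inf} for the upper bound. Your additional remarks on the degenerate cases and on $E_H\subseteq E_F$ are more careful than the paper's own write-up but add no new idea.
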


\begin{proof}
    From the proof of Theorem \ref{thm:subsample_theory}, we know that
    \begin{align*}
        c \cdot \kappa(\widetilde{F}|H)  = \essinf{x: h(x) > 0} \frac{\alpha(x) \cdot f(x)}{h(x)}
    \end{align*}
    From the fact that $\alpha(x) \geq P(Y=1|X=x) \ \forall x$, we have
    \begin{align*}
        c \cdot \kappa(\widetilde{F}|H) & \geq \essinf{x: h(x) > 0} \frac{P(Y=1|X=x) \cdot f(x)}{h(x)} \\
        & = \essinf{x: h(x) > 0} \kappa^* \\
        & = \kappa^*.
    \end{align*}
    What's more, since $\alpha \leq 1 \ \forall x$,
    \begin{align*}
        c \cdot \kappa(\widetilde{F}|H) \leq \essinf{x: h(x) > 0} \frac{1 \cdot f(x)}{h(x)} = \kappa(F|H) .
    \end{align*}
\end{proof}

\subsection{Proof of Theorem \ref{thm:roc}}

\label{append:rate}
We now establish a rate of convergence result for estimator $\widehat{\kappa}^* = \widehat{c} \cdot \widehat{\kappa}(\widetilde{F}|H)$ from Algorithm \ref{alg:submpe}.
\begin{theorem*} 
Assume $\alpha(x)$ satisfies the condition in Eqn. \eqref{eq:alpha_x}.  After subsampling, assume the resulting $\widetilde{F}$ and $H$ are such that $\argmin{S \in \mathcal{A}: H(S) > 0 } \frac{\widetilde{F}(S)}{H(S)}$ exists. Then there exists a constant $C > 0$ and an existing estimator $\widehat{\kappa}$ such that for $m$ and $n$ sufficiently large, the estimator $\widehat{\kappa}^*$ from Algorithm \ref{alg:submpe} satisfies
    \begin{equation*}
        \Pr \left( \abs{\widehat{\kappa}^* - \kappa^*} \leq C \left[ \sqrt{\frac{\log m}{m}} + \sqrt{\frac{\log n}{n}} \right] \right) \geq 1 -  \mathcal{O} \left( \frac{1}{m} + \frac{1}{n} \right).
    \end{equation*}
\end{theorem*}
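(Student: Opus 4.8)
The plan is to decompose the error $|\widehat{\kappa}^* - \kappa^*|$ using the triangle inequality around the population quantity $c \cdot \kappa(\widetilde{F}|H)$, which by Theorem~\ref{thm:subsample_theory} equals $\kappa^*$ exactly. Writing $\widehat{\kappa}^* = \widehat{c}\cdot\widehat{\kappa}(\widetilde F|H)$ and $\kappa^* = c\cdot\kappa(\widetilde F|H)$, I would split as
\begin{align*}
|\widehat{\kappa}^* - \kappa^*| \le |\widehat c - c|\cdot \widehat\kappa(\widetilde F|H) + c\cdot|\widehat\kappa(\widetilde F|H) - \kappa(\widetilde F|H)|,
\end{align*}
so that the two sources of error are (i) the estimation of the normalizing constant $c$ by the empirical acceptance rate $\widehat c = |X_{\widetilde F}|/|X_F|$, and (ii) the estimation error of the off-the-shelf MPE routine applied to the subsampled data $X_{\widetilde F}$ against $X_H$. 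Since $\widehat\kappa(\widetilde F|H) \le 1$ and $c \le 1$, both prefactors are bounded by $1$ and can be dropped.

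For term (i), $\widehat c$ is an average of i.i.d. Bernoulli-type acceptance indicators (one per point of $X_F$, each accepted with probability $\alpha(x)$ conditional on $x$; marginally accepted with probability $c$), so $\mathbb E[\widehat c] = c$ and a Hoeffding or Bernstein bound gives $|\widehat c - c| \le C_1\sqrt{\log n / n}$ with probability at least $1 - \mathcal O(1/n)$. (Technically $|X_F| = n$ is fixed and $|X_{\widetilde F}|$ is random binomial, so this is immediate.) For term (ii), I would invoke the known finite-sample rate for an existing MPE estimator that is consistent under irreducibility — e.g., the estimator of \citet{blanchard2016classification} or \citet{scott2015rate}, whose rate-of-convergence theorems state that under the assumption that the minimizing set $\argmin_{S: H(S)>0} \widetilde F(S)/H(S)$ exists (exactly the hypothesis imposed in the theorem), one has $|\widehat\kappa(\widetilde F|H) - \kappa(\widetilde F|H)| \le C_2[\sqrt{\log \widetilde m/\widetilde m} + \sqrt{\log n/n}]$ with probability $1 - \mathcal O(1/\widetilde m + 1/n)$, where $\widetilde m = |X_{\widetilde F}|$ is the subsample size. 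The subtlety is that $X_{\widetilde F}$ is a \emph{random-size} i.i.d. sample from $\widetilde F$; conditional on $\widetilde m$, its points are i.i.d. $\widetilde F$, so the cited rate applies conditionally, and then one integrates out $\widetilde m$.

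The final step is to convert the $\widetilde m$-dependence into an $m$-dependence. Here I would note that the subsampling in Algorithm~\ref{alg:submpe} is applied to $X_F$ (size $n$), not to $X_H$ — so one must be careful about what ``$m$'' refers to. Re-reading the setup, $m = |X_H|$ and $n = |X_F|$; the subsample $X_{\widetilde F}$ has size $\widetilde m \sim \mathrm{Binomial}(n, c)$, which concentrates around $cn$, and since $c$ is a fixed positive constant ($c > 0$ because $\alpha \ge P(Y=1|X=\cdot)$ is not a.e. zero as long as $\kappa^* > 0$; the $\kappa^*=0$ case is trivial), we get $\widetilde m \ge cn/2$ with probability $1 - \mathcal O(1/n)$ by a multiplicative Chernoff bound, hence $\sqrt{\log\widetilde m/\widetilde m} \le C_3\sqrt{\log n/n}$ on that event. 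Combining: with probability $1 - \mathcal O(1/m + 1/n)$, all three events hold simultaneously (union bound), and $|\widehat\kappa^* - \kappa^*| \le C[\sqrt{\log m/m} + \sqrt{\log n/n}]$ with $C = C_1 + C_2 + C_2 C_3 + \dots$ absorbing constants. I expect the main obstacle to be bookkeeping around the random sample size $\widetilde m$ — specifically, making the conditioning argument rigorous (the cited MPE rate bounds are stated for deterministic sample sizes, so one conditions on $\widetilde m$, applies the bound, and then handles the low-probability event $\widetilde m < cn/2$ separately) and ensuring the failure probabilities compose correctly into $\mathcal O(1/m + 1/n)$ rather than something worse. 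A secondary point requiring care is confirming that subsampling preserves the structural hypothesis needed by the base estimator — but this is exactly what the theorem assumes outright (existence of the argmin for $\widetilde F, H$), so no extra work is needed there.
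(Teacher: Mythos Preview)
Your proposal is correct and follows essentially the same route as the paper's proof: the triangle-inequality decomposition of $|\widehat c\,\widehat\kappa - c\,\kappa|$ into a $|\widehat c - c|$ term (handled by Hoeffding on the acceptance indicators) and a $|\widehat\kappa(\widetilde F|H) - \kappa(\widetilde F|H)|$ term (handled by citing the rate of \citet{scott2015rate}), together with a Chernoff/Hoeffding bound on the random subsample size to convert the $\widetilde m$-dependence into an $n$-dependence, all combined by a union bound. One small slip to fix in the write-up: when you invoke the base MPE rate you wrote $\sqrt{\log n/n}$ for the second sample, but that sample is $X_H$ of size $m$ (which is not subsampled), so the cited bound should read $C_2[\sqrt{\log \widetilde m/\widetilde m} + \sqrt{\log m/m}]$ --- after converting $\widetilde m$ to $n$ this yields exactly the $\sqrt{\log m/m} + \sqrt{\log n/n}$ in the theorem, so the argument is unaffected.
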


\begin{proof}
    Recall the setup, we originally have i.i.d. sample 
    \footnote{
    The notation here is a bit different from Eqn. \eqref{eq:setup} in that the index of $X_{i}$ is changed. This allows for more concise notation in the following derivation.
    }
    \begin{equation*}
        \begin{split}
            X_F & := \left\{ X_{1}, X_{2}, \cdots, X_{n} \right\} \stackrel{iid}{\sim} F, \\
            X_H & := \left\{ X_{n+1}, X_{n+2}, \cdots, X_{n+m} \right\} \stackrel{iid}{\sim} H.  
        \end{split} 
    \end{equation*}


    After rejection sampling, we obtain $n'$ i.i.d. sample $X_{\widetilde{F}} \sim \widetilde{F}$ (where $\expect{n'}{} = c \cdot n$ and $c = \int \alpha(x) f(x) dx$), from which we can estimate $\kappa(\widetilde{F}|H)$.
    Under the assumption that $\argmin{S \in \mathcal{A}: H(S) > 0 } \frac{\widetilde{F}(S)}{H(S)}$ exists, estimator $\widehat{\kappa}(\widetilde{F}|H)$ by \citet{scott2015rate} has rate of convergence
    \begin{equation}
        \Pr \left( \abs{\widehat{\kappa}(\widetilde{F}|H) - \kappa(\widetilde{F}|H)} \leq C_1 \left[ \sqrt{\frac{\log m}{m}} + \sqrt{\frac{\log n'}{n'}} \right] \right) \geq 1 - \frac{2}{m} - \frac{2}{n'}.
        \label{eq:roc2015}
    \end{equation}

    Now $n'$ is a random variable here, and we want to establish a rate of convergence result involving $n$. This can be done by applying a concentration inequality for $n'$. 

    \begin{theorem*}(Hoeffding's Inequality)
        Let $Z_1, \cdots Z_n$ be independent random variables on $\mathbb{R}$ that take values in $[0, 1]$. Denote $Z = \frac{1}{n} \sum_{i=1}^n Z_i$, then for all $\epsilon > 0$,
        \begin{equation*}
            \Pr \left(\abs{\frac{1}{n} \sum_{i=1}^n Z_i - \expect{Z}{}} \leq \epsilon  \right) \geq 1 - 2 \exp(-2n\epsilon^2).
        \end{equation*}
        Let $\delta = 2 \exp{(-2n\epsilon^2)}$, the theorem can be restated as:
        \begin{equation*}
            \Pr \left(\abs{\frac{1}{n} \sum_{i=1}^n Z_i - \expect{Z}{}} \leq \sqrt{ \frac{\log (1/\delta)}{2n} }  \right) \geq 1 - \delta.
        \end{equation*}
    \end{theorem*}

    Take $Z_i = \indicator{\{V_i \leq \alpha(X_{i}) \} } $, where $i \in \{1, 2, \cdots, n\}$, $V_i$ denotes the $i$-th independent draw from $\text{Uniform}(0,1)$ 
    \footnote{$V_i$ is used in rejection sampling (Algorihm \ref{alg:rejSample}).} 
    and $X_{i}$ denote the $i$-th draw from $F$. Then
    \begin{align*}
        & n' = \sum_{i=1}^n Z_i, \\
        & \expect{Z}{} = \frac{1}{n} \sum_{i=1}^n \expect{Z_i}{} = \expect{Z_i}{(X_i, V_i)} = \expect{ \expect{\indicator{\{V_i \leq \alpha(X_i) \} }}{V_i|X_i} }{X_i} = \expect{ \alpha(X_i) }{X_i} 
        = \int \alpha(x) f(x) dx = c.
    \end{align*}
    Plug into Hoeffding's Inequality and setting $\epsilon = \frac{c}{2}$, we have 
    \begin{equation}
        \Pr \left(\abs{n' - c \cdot n} \leq \frac{c}{2} \cdot n  \right) \geq 1 - 2 \exp \left(- \frac{c}{2} n \right),
        \label{eq:bound_n}
    \end{equation}
    which allows us to bound $n'$ by a constant times $n$.

    Now we can establish a rate of convergence result of $\widehat{\kappa}(\widetilde{F}|H)$ w.r.t. $n$
    \begin{align*}
        & \Pr \left( \abs{\widehat{\kappa}(\widetilde{F}|H) - \kappa(\widetilde{F}|H)} \leq C_2 \left[ \sqrt{\frac{\log m}{m}} + \sqrt{\frac{\log n}{n}} \right] \right) \\
        \geq \ & \Pr \left(
        \left(\abs{\widehat{\kappa}(\widetilde{F}|H) - \kappa(\widetilde{F}|H)} \leq C_2 \left[ \sqrt{\frac{\log m}{m}} + \sqrt{\frac{\log n}{n}} \right] \right) 
        \text{and} 
        \left( \frac{c}{2} \cdot n \leq n' \leq \frac{3c}{2} \cdot n  \right)
        \right) \quad \because \Pr (A) \geq \Pr (A \cap B) \\
        = \ & \Pr \left(
        \left(\abs{\widehat{\kappa}(\widetilde{F}|H) - \kappa(\widetilde{F}|H)} \leq C_2 \left[ \sqrt{\frac{\log m}{m}} + \sqrt{\frac{\log n}{n}} \right] \right) 
        \Biggl|
        \left( \frac{c}{2} \cdot n \leq n' \leq \frac{3c}{2} \cdot n  \right)
        \right) \cdot \Pr \left( \frac{c}{2} \cdot n \leq n' \leq \frac{3c}{2} \cdot n  \right) \\
        \geq \ & \left( 1 - \mathcal{O} \left( \frac{1}{m} + \frac{1}{n} \right) \right) \cdot \left( 1 - 2 \exp \left(- \frac{c}{2} n \right) \right) \quad \because  \text{$n$ and $n'$ can be used interchangeably in the first probability term} \\
        \geq \ & 1 - \mathcal{O} \left( \frac{1}{m} + \frac{1}{n} \right) - 2 \exp \left(- \frac{c}{2} n \right) \quad \because (1-a)(1-b) \geq 1 - a - b \\
        \geq \ & 1 - \mathcal{O} \left( \frac{1}{m} + \frac{1}{n} \right) \quad \because \exp (-n) \text{ decays faster}
    \end{align*}
    
    
    As for the estimator of $c = \expect{\alpha(x)}{X \sim F}$, the rate of convergence can also be shown via Hoeffding's Inequality,
    \begin{align*}
        \widehat{c} = \frac{n'}{n} = \frac{1}{n} \sum_{i=1}^n Z_i, \quad c = \expect{Z}{}.
    \end{align*}
    
    Plug into Hoeffding's Inequality and let the confidence $\delta = 1/n$, we have 
    \begin{equation}
        \Pr \left(\abs{\widehat{c} - c} \leq \sqrt{ \frac{\log n}{2n} }  \right) \geq 1 - 1/n.
    \end{equation}
    
    

    
    Note that by triangle inequality
    \begin{align*}
        \abs{ \widehat{c} \cdot \widehat{\kappa}(\widetilde{F}|H) - c \cdot \kappa(\widetilde{F}|H) } & = \abs{ \widehat{c} \cdot \widehat{\kappa}(\widetilde{F}|H) - \widehat{c} \cdot \kappa(\widetilde{F}|H) + \widehat{c} \cdot \kappa(\widetilde{F}|H) - c \cdot \kappa(\widetilde{F}|H) } \\
        & \leq \abs{ \widehat{c} \cdot \widehat{\kappa}(\widetilde{F}|H) - \widehat{c} \cdot \kappa(\widetilde{F}|H) } + \abs{ \widehat{c} \cdot \kappa(\widetilde{F}|H) - c \cdot \kappa(\widetilde{F}|H) } \\
        & = \abs{\widehat{c}} \cdot \abs{ \widehat{\kappa}(\widetilde{F}|H) - \kappa(\widetilde{F}|H) } + \abs{ \kappa(\widetilde{F}|H) } \cdot \abs{ \widehat{c} - c } \\
        & \leq \abs{ \widehat{\kappa}(\widetilde{F}|H) - \kappa(\widetilde{F}|H) } + \abs{ \widehat{c} - c }.
    \end{align*}
    Finally, combine all previous results
    \begin{align*}
         & \Pr \left( \abs{\widehat{\kappa}^* - \kappa^*} \leq C \left[ \sqrt{\frac{\log m}{m}} + \sqrt{\frac{\log n}{n}} \right] \right) \\
         = \ & \Pr \left( \abs{ \widehat{c} \cdot \widehat{\kappa}(\widetilde{F}|H) - c \cdot \kappa(\widetilde{F}|H) } \leq C \left[ \sqrt{\frac{\log m}{m}} + \sqrt{\frac{\log n}{n}} \right] \right) \\
         \geq \ & \Pr \left(  \abs{ \widehat{\kappa}(\widetilde{F}|H) - \kappa(\widetilde{F}|H) } + \abs{ \widehat{c} - c }  \leq C \left[ \sqrt{\frac{\log m}{m}} + \sqrt{\frac{\log n}{n}} \right] \right)  \\
         \geq \ & \Pr \left( \left( \abs{ \widehat{\kappa}(\widetilde{F}|H) - \kappa(\widetilde{F}|H) } \leq \frac{C}{2} \left[ \sqrt{\frac{\log m}{m}} + \sqrt{\frac{\log n}{n}} \right] \right) \text{and} \left( \abs{ \widehat{c} - c } \leq \frac{C}{2} \left[ \sqrt{\frac{\log m}{m}} + \sqrt{\frac{\log n}{n}} \right] \right) \right) \\
         \geq \ & 1 - \mathcal{O} \left( \frac{1}{m} + \frac{1}{n} \right),
    \end{align*}

    then we can conclude that $\widehat{\kappa}^*  \rightarrow \kappa^*$ with rate of convergence $O \left(\sqrt{\frac{\log m}{m}} + \sqrt{\frac{\log n}{n}} \right)$.
    
\end{proof}

\subsection{Proof of Proposition \ref{prop:re_under}}
\begin{proposition*}
    For $\kappa(F|H')$ obtained from ReMPE-2:
    \begin{align*}
        \kappa(F|H') < \kappa(F|H).
    \end{align*}
\end{proposition*}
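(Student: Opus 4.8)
The plan is to produce a single explicit test set $S$ on which the ratio $F(S)/H'(S)$ already dips strictly below $\kappa(F|H)$; since $\kappa(F|H') = \inf_{S:\,H'(S)>0} F(S)/H'(S)$ by Proposition~\ref{prop:inf}, this immediately yields $\kappa(F|H') < \kappa(F|H)$. The natural candidate is the very set $B$ that ReMPE-2 uses, $B = \arg\min_{S} F(S)/H(S)$ (Algorithm~\ref{alg:ReMPE-2}), because $B$ is exactly where ReMPE-2 dumps extra $F$-mass when forming $H'$, so it should be where the likelihood ratio drops the most.

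First I would fix notation: write $\kappa_0 := \kappa(F|H)$, let $B$ be the minimizing set, which exists by the assumption (inherited from \citet{yao2022rethinking}) that the infimum in Eqn.~\eqref{eq:inf} is attained, and set $b := H(B) > 0$ and $\beta := F(B)$. Since $B$ attains the minimum, $\beta = \kappa_0 b$. Recalling $H' = \tfrac{1}{1+\beta}(F_B + H)$ with $F_B$ the (unnormalized) restriction of $F$ to $B$, one has $(1+\beta)H'(S) = F(S\cap B) + H(S)$ for every measurable $S$, so $H'$ is indeed a probability measure, and in particular $H'(B) = \tfrac{1}{1+\beta}(\beta + b) > 0$.

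The second step is a one-line evaluation at $S = B$: Proposition~\ref{prop:inf} gives $\kappa(F|H') \le F(B)/H'(B) = \beta(1+\beta)/(\beta+b)$, and it remains to compare this with $\kappa_0 = \beta/b$. Clearing denominators and dividing by $\beta$, the inequality $\beta(1+\beta)/(\beta+b) < \beta/b$ reduces to $b(1+\beta) < \beta+b$, i.e. to $b < 1$. Equivalently, the entire content is that $\beta = \kappa_0 b < \kappa_0$ whenever $b = H(B) < 1$, and $\beta < \kappa_0$ is precisely what is needed; hence $\kappa(F|H') < \kappa(F|H)$.

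The step I expect to demand the most care is not the computation but the degenerate bookkeeping hidden in ``$\beta > 0$'' and ``$b < 1$''. The condition $\beta = F(B) > 0$ is equivalent to $\kappa(F|H) > 0$; if $\kappa(F|H) = 0$ then $F_B = 0$, $H' = H$, and no strict inequality is possible — but then $\kappa^* \le \kappa(F|H) = 0$ and the instance is vacuous. The condition $b = H(B) < 1$ can fail only if a minimizing set of full $H$-measure is picked, in which case — using the standard fact that every $H$-positive subset of a minimizing set is again minimizing, so $F = \kappa_0 H$ on $B$ — one again gets $H' = H$ and equality. Thus the strict inequality holds exactly in the non-degenerate regime (in particular whenever $0 < \kappa(F|H) < 1$, or whenever $B$ is a small set as in the practical 10\% variant), which is the regime the proposition concerns.
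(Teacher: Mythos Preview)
Your proof is correct and follows essentially the same approach as the paper: both evaluate the ratio $F(S)/H'(S)$ at the minimizing set $S=B$ and compare it to $\kappa(F|H)=F(B)/H(B)$. If anything, your version is more careful --- you only use the inequality $\kappa(F|H')\le F(B)/H'(B)$ (the paper asserts equality, which is unnecessary and not justified), and you explicitly handle the degenerate cases $F(B)=0$ and $H(B)=1$ that the paper's one-line claim ``$H'(B)>H(B)$'' silently assumes away.
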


\begin{proof}
    From the fact that $B = \argmin{S \in \mathfrak{S}} \frac{F(S)}{H(S)}$, we have
    \begin{align*}
        \kappa(F|H) = \inf_{S \in \mathcal{A}: H(S) > 0 } \frac{F(S)}{H(S)} = \frac{F(B)}{H(B)} .
    \end{align*}
    After regrouping, $ H'(B) > H(B) $. Therefore,
    \begin{align*}
        \kappa(F|H') & = \inf_{S \in \mathcal{A}: H'(S) > 0 } \frac{F(S)}{H'(S)} \\
                     & = \frac{F(B)}{H'(B)} \\
                     & < \frac{F(B)}{H(B)} \\
                     & = \kappa(F|H).
    \end{align*}
    
\end{proof}

\section{More about Subsampling MPE}

\subsection{Intuition}
\label{app:sub_better}
Theorem \ref{thm:subsample_theory} and Corollary \ref{corol:upp_bd} have already justified the use of subsampling. Here, we explain in another perspective (in terms of distributions, similar to the analysis in \citet{yao2022rethinking}).
The idea is that, since the original $G$ may violate irreducibility assumption, we modify $G$ such that the resulting latent component distribution is less likely to violate the assumption. 

Write the unknown distribution $G$ itself as a mixture $G = \gamma G_1 + (1-\gamma) G_2, \ \gamma \geq 0$. Then $F$ becomes: 
\begin{align*}
    F & = (1-\kappa^*) G + \kappa^* H \\
      & = (1-\kappa^*) \left[ \gamma G_1 + (1-\gamma) G_2 \right] + \kappa^* H. 
\end{align*}
Switch $G_1$ to the other side (i.e., discarding the probability mass from $G_1$)
\begin{align*}
    F - \textcolor{red}{(1-\kappa^*)\gamma G_1}  & = (1-\kappa^*)(1-\gamma) G_2 + \kappa^* H,
\end{align*}
the left hand side can be re-written as
\begin{align*}
    \underbrace{ \left[ 1-(1-\kappa^*)\gamma \right] }_{:=c, \text{ normalizing const.}} \underbrace{\frac{1}{1-(1-\kappa^*)\gamma} \left[F - \textcolor{red}{(1-\kappa^*)\gamma G_1} \right]  }_{:= \widetilde{F}, \textbf{ after subsampling} }. 
\end{align*}

Then the resulting distribution $\widetilde{F}$ is 
\begin{align*}
    \widetilde{F} & = \frac{(1-\kappa^*)(1-\gamma)}{c} G_2 + \frac{\kappa^*}{c} H \\
                  & =: (1-\tilde{\kappa}^*) \widetilde{G} + \tilde{\kappa}^* H.
\end{align*}

By discarding a portion of probability mass from $G$, which is done by subsampling in practice, the resulting latent component distribution $\widetilde{G}$ is less likely to violate the irreducibility assumption. The new mixture proportion $\tilde{\kappa}^* =  \kappa^*/c$.

In the following, we provide justification of the above claim, which can also be seen as a reformulation of Corollary \ref{corol:upp_bd}.

\begin{proposition}
    Given some probability mass from $G$ being dropped out, $c \cdot \kappa(\widetilde{F}|H)$ is always bounded by: 
    $$ \kappa^* \leq c \cdot \kappa(\widetilde{F}|H) \leq \kappa(F|H). $$
    Furthermore, bias is strictly reduced when $(1-\gamma) \kappa(G_2|H) < \kappa(G|H) $.
    \label{prop:sub_better}
\end{proposition}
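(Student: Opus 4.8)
The plan is to reuse the machinery already developed for Corollary~\ref{corol:upp_bd}, since the proposition is essentially a reformulation in the distribution-level notation of Appendix~\ref{app:sub_better}. First I would observe that in the construction $\widetilde{F} = \frac{1}{c}[F - (1-\kappa^*)\gamma G_1]$ with $G = \gamma G_1 + (1-\gamma) G_2$, the implicit acceptance function is $\alpha(x) = 1 - (1-\kappa^*)\gamma \frac{g_1(x)}{f(x)}$ on $\{f(x)>0\}$ (and one checks $\widetilde{f}(x) = \frac{1}{c}\alpha(x) f(x)$ with $c = 1-(1-\kappa^*)\gamma$). The key point is to verify that this $\alpha$ satisfies $P(Y=1|X=x) \le \alpha(x) \le 1$ for all $x$: the upper bound is immediate since $(1-\kappa^*)\gamma g_1(x)/f(x) \ge 0$, and the lower bound follows because $(1-\kappa^*)\gamma g_1(x) \le (1-\kappa^*) g(x) \le f(x) - \kappa^* h(x)$ (using $f = (1-\kappa^*)g + \kappa^* h$), so $\alpha(x) \ge \kappa^* h(x)/f(x) = P(Y=1|X=x)$. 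Hence Corollary~\ref{corol:upp_bd} applies verbatim and gives $\kappa^* \le c\cdot\kappa(\widetilde{F}|H) \le \kappa(F|H)$, the first claim.

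For the strict bias reduction, I would compute $c\cdot\kappa(\widetilde{F}|H)$ directly from the decomposition $\widetilde{F} = \frac{(1-\kappa^*)(1-\gamma)}{c} G_2 + \frac{\kappa^*}{c} H$. Applying Eqn.~\eqref{eq:reducible} to this mixture (with $\widetilde{\kappa}^* = \kappa^*/c$), we get
\begin{align*}
\kappa(\widetilde{F}|H) = \frac{\kappa^*}{c} + \frac{(1-\kappa^*)(1-\gamma)}{c}\,\kappa(G_2|H),
\end{align*}
so that $c\cdot\kappa(\widetilde{F}|H) = \kappa^* + (1-\kappa^*)(1-\gamma)\kappa(G_2|H)$. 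Comparing with $\kappa(F|H) = \kappa^* + (1-\kappa^*)\kappa(G|H)$ from Eqn.~\eqref{eq:reducible}, the bias of the subsampled estimator is strictly smaller than that of $\kappa(F|H)$ exactly when $(1-\gamma)\kappa(G_2|H) < \kappa(G|H)$, which is the stated condition. The argument also recovers the weak inequality $c\cdot\kappa(\widetilde{F}|H)\ge\kappa^*$ as a sanity check, since $\kappa(G_2|H)\ge 0$.

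The main obstacle — really the only subtle point — is justifying that the distribution-level operation of ``switching $G_1$ to the other side'' indeed corresponds to a legitimate acceptance function in the sense of Corollary~\ref{corol:upp_bd}, i.e., that $F - (1-\kappa^*)\gamma G_1$ is a nonnegative (sub-probability) measure and that the induced $\alpha(x)$ lies in $[P(Y=1|X=x),1]$ pointwise. This requires the inequality $(1-\kappa^*)\gamma g_1(x) \le f(x) - \kappa^* h(x)$, which holds $\mu$-a.e. because $\gamma G_1 \le G$ as measures and $(1-\kappa^*)G = F - \kappa^* H$. One should also handle the degenerate case $\gamma = 0$ (nothing is dropped, $\widetilde{F}=F$, $c=1$, and both inequalities hold trivially) and the case $\kappa^* = 0$ (where $P(Y=1|X=x)\equiv 0$ and everything still goes through) separately, mirroring the case analysis in the proofs of Theorems~\ref{thm:identify} and~\ref{thm:subsample_theory}. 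Once these measure-theoretic checks are in place, the result is a direct corollary of Eqn.~\eqref{eq:reducible} and Corollary~\ref{corol:upp_bd}.
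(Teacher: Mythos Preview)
Your proposal is correct. For the double inequality, you take a different route than the paper: the paper computes directly at the measure level, writing $c\cdot\kappa(\widetilde{F}|H) = \kappa^* + (1-\kappa^*)\inf_{S}\frac{(1-\gamma)G_2(S)}{H(S)}$ from the decomposition $c\,\widetilde{F} = (1-\kappa^*)(1-\gamma)G_2 + \kappa^* H$, then uses the pointwise inequality $(1-\gamma)G_2 \le G$ to obtain the upper bound $\kappa(F|H)$. You instead identify the implicit acceptance function $\alpha(x) = 1 - (1-\kappa^*)\gamma\,g_1(x)/f(x)$, verify it lies in $[P(Y=1\mid X=x),1]$, and invoke Corollary~\ref{corol:upp_bd}. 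Your route makes explicit the connection the paper only alludes to in the sentence preceding the proposition (``can also be seen as a reformulation of Corollary~\ref{corol:upp_bd}''), and it has the conceptual payoff of unifying the acceptance-function and distribution-level perspectives; the paper's direct computation is shorter and self-contained, needing nothing beyond Proposition~\ref{prop:inf}. For the strict-reduction claim, the two proofs coincide: both compute $c\cdot\kappa(\widetilde{F}|H) = \kappa^* + (1-\kappa^*)(1-\gamma)\kappa(G_2|H)$ and compare with $\kappa(F|H) = \kappa^* + (1-\kappa^*)\kappa(G|H)$.
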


\begin{proof}
    Observe that
    \begin{align*}
        c \cdot \kappa(\widetilde{F}|H) & = \inf_{S \in \mathcal{A}: H(S) > 0 } \frac{c \cdot \widetilde{F}(S)}{H(S)} \\
        & = \kappa^* + (1-\kappa^*) \inf_{S \in \mathcal{A}: H(S) > 0 } \frac{(1-\gamma) G_2(S)}{H(S)},
    \end{align*}    
    therefore $c \cdot \kappa(\widetilde{F}|H) \geq \kappa^*$.
    
    From the fact that $G = \gamma G_1 + (1-\gamma) G_2$, we have $G \geq (1-\gamma) G_2$. Then
    \begin{align*}
        c \cdot \kappa(\widetilde{F}|H)
        & \leq \kappa^* + (1-\kappa^*) \inf_{S \in \mathcal{A}: H(S) > 0 } \frac{G(S)}{H(S)}  \\
        & = \kappa(F|H).
    \end{align*}    
    To have bias reduction, we need $c \cdot \kappa(\widetilde{F}|H) < \kappa(F|H)$, where both quantities can be represented as
    \begin{align*}
        c \cdot \kappa(\widetilde{F}|H) & = \kappa^* + (1-\kappa^*)(1-\gamma) \kappa(G_2|H) \\
        \kappa(F|H) & = \kappa^* + (1-\kappa^*) \kappa(G|H).
    \end{align*}
    Then we can get the result of $(1-\gamma) \kappa(G_2|H) < \kappa(G|H) $ by direct comparison.
\end{proof}

Based on the above proof, we claim that $c \cdot \kappa(\widetilde{F}|H)$ leads to no worse estimation bias than $\kappa(F|H)$. To be specific, when $G$ is irreducible w.r.t. $H$, then $\kappa^* =  c \cdot \kappa(\widetilde{F}|H) = \kappa(F|H)$. When $G$ is not irreducible w.r.t. $H$, then $\kappa^* \leq  c \cdot \kappa(\widetilde{F}|H) \leq \kappa(F|H)$. 
The key difference compared to \citet{yao2022rethinking}'s approach is that we are modifying $G$, thus equivalently subsampling $F$, rather than regrouping on $H$.

In summary, without making assumptions about irreducibility, as long as we remove some contribution from $G$ in $F$ by subsampling, the resulting identifiable quantity $c \cdot \kappa(\widetilde{F}|H)$ will be at least no worse than the maximum proportion $\kappa(F|H)$.
Furthermore, with the knowledge of set $A$ and $\esssup{x \in A} P(Y=1|X=x)$, $c \cdot \kappa(\widetilde{F}|H)$ will equal to $\kappa^*$. 


\subsection{Rejection Sampling}
\label{app:rej_samp}
Rejection sampling is a Monte Carlo method that aims to generate sample following a new distribution $\widetilde{Q}$ based on sample from distribution $Q$, which are characterised by the densities $\widetilde{q}$ and $q$. An instance $x$ drawn from $q(x)$ is kept with acceptance probability $\beta(x) \in [0, 1]$, and rejected otherwise. Algorithm \ref{alg:rejSample} shows the detailed procedure.

\begin{algorithm}[H]
   \caption{Rejection Sampling}
   \label{alg:rejSample}
\begin{algorithmic}
   \STATE {\bfseries Input:} \\ \quad sample $[x_i] = \{ x_{1}, \cdots, x_{n} \} \sim q(x)$, \\ 
                \quad acceptance function $\beta(x)$
    \STATE {\bfseries Output:} \\ \quad sample $ \left[ z_{j} \right] \sim \widetilde{q}(x) = \frac{1}{c} \cdot \beta(x) \cdot q(x) $, \\
    \quad \quad where $c = \int \beta(x) q(x) dx = \expect{\beta(X)}{X \sim q} $
   \STATE Initialize $j \gets 1$
   \FOR{$i=1$ {\bfseries to} $n$}
   \STATE $ v \sim \text{Uniform}(0,1) $
   \IF{$v \leq \beta(x)$}
   \STATE $z_j \gets x_{i}$
   \STATE $j \gets j+1$
   \ENDIF
   \ENDFOR
   \STATE {\bfseries return} $ \left[ z_{j} \right] $

\end{algorithmic}
\end{algorithm}



\subsection{Gamma Spectrum Unfolding}
\label{app:unfolding_detail}

In gamma spectrum unfolding, a gamma ray detector measures the energies of incoming gamma ray particles. The gamma rays were emitted either by a source of interest or from the background. The measurement is represented as a histogram $f(x)$, where the bins correspond to a quantization of energy. The histogram $h(x)$ of measurements from the source of interest is also known.

The goal is to obtain a lower bound $\rho(x)$ of the quantity $(1-\kappa^*)g(x)$ on a certain set $A \subset \supp(H)$. We specify $A$ to be the energy bins near the main peak of $h(x)$ (aka, full-energy peak). For $x \in \supp(F) \backslash \supp(H)$, $\rho(x) = f(x)$ because the gamma rays must come from the background in these regions. Typically, $\supp(F) \backslash \supp(H)$ contains two (or more) intervals, therefore we know the value of $\rho(x)$ on either side of set $A$. Then $\rho(x) \ \forall x \in A$ can be estimated using linear interpolation \citep{knoll2010radiation, alamaniotis2013kernel}. 
The above procedure is illustrated in Figure \ref{fig:gamma_unfolding} and the acceptance function is chosen to be
\begin{equation}
    \alpha(x) = 
        \begin{cases}
            1 - \frac{\rho(x)}{f(x)},  & x \in A, \\
            1, & \text{o.w.}
        \end{cases}
\end{equation}
\begin{figure}[H]
    \centering
    \includegraphics[width=0.5\textwidth]{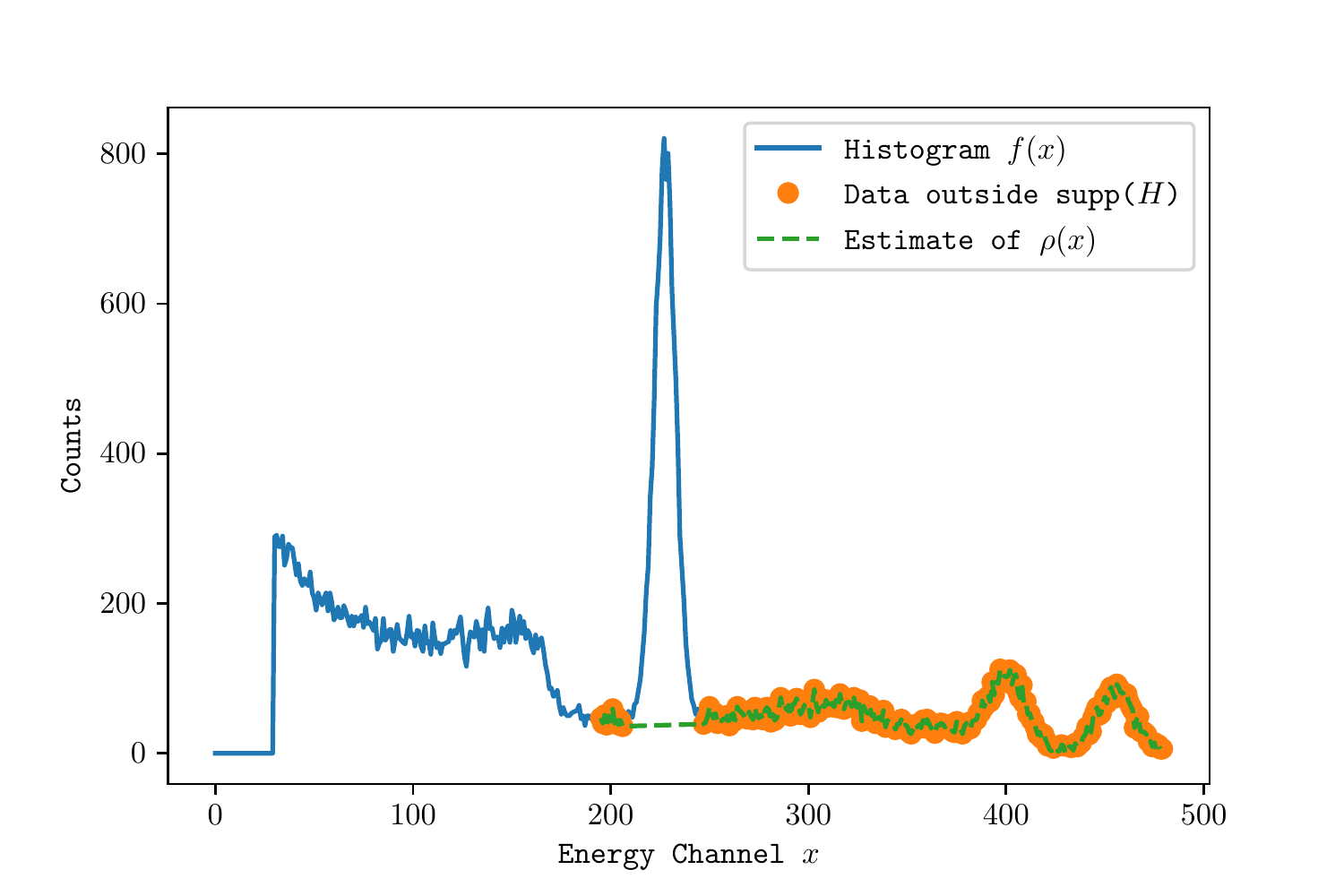}
    \caption{Visual illustration of how to estimate $\rho(x)$}
    \label{fig:gamma_unfolding}
\end{figure}

\section{Detailed Experimental Result in Sec. \ref{sec:experiment}}
\label{sec:more_experiment}

This section shows four tables corresponding to four experimental setups in Sec. \ref{sec:experiment}. 

\begin{table*}[h!]
    \caption{Average absolute estimation error on gamma ray spectra data, corresponding to the unfolding scenario. Several state-of-the-art MPE algorithms DPL \citep{ivanov2020dedpul}, EN \citep{elkan2008learning}, KM \citep{ramaswamy2016mixture} and TIcE \citep{bekker2018estimating} are selected. The mean absolute error ($\avg[\abs{\widehat{\kappa}^* - \kappa^*}]$) is reported, the smallest error among original, regrouping and subsampling versions is bolded. $(+/-)$ denotes that on average the estimator produces positive$/$negative estimation bias ($\sgn[\avg(\widehat{\kappa}^* - \kappa^*)]$). 
    }
    \label{table:gamma_err}
    \vskip 0.15in
    \centering
    \resizebox{\textwidth}{!}{
    \begin{tabular}{ c | c c c | c c c|c c c| c c c }
     $\kappa^*$ & DPL & ReDPL & SuDPL & EN & ReEN & SuEN & KM & ReKM & SuKM & TIcE & ReTIcE & SuTIcE \\ \hline
     $0.1$ & $0.063+$ & $0.053+$ & $\bm{0.008+}$ & $0.057+$ & $0.048+$ & $\bm{0.012-}$ & $0.226+$ & $0.146+$ & $\bm{0.036+}$ & $0.129+$ & $0.115+$ & $\bm{0.019+}$ \\ \hline
     $0.25$ & $0.055+$ & $\bm{0.008+}$ & $0.011+$ & $0.040+$ & $\bm{0.004-}$ & $0.017-$ & $0.216+$ & $\bm{0.020+}$ & $0.059+$ & $0.112+$ & $0.065+$ & $\bm{0.017+}$ \\ \hline 
     $0.5$ & $0.029+$ & $0.123-$ & $\bm{0.019-}$ &  $\bm{0.017+}$ & $0.129-$ & $0.032-$ & $0.130+$ & $\bm{0.043+}$ & $0.049+$ & $0.082+$ & $\bm{0.018+}$ & $0.019+$ \\ \hline 
     $0.75$ & $0.034+$ & $0.284-$ & $\bm{0.015+}$ & $\bm{0.022-}$ & $0.289-$ & $0.047-$ & $0.078+$ & $0.094-$ & $\bm{0.022+}$ & $0.055+$ & $0.045-$ & $\bm{0.021+}$ \\ \hline 
     average & $0.045+$ & $0.117-$ & $\bm{0.013+}$ & $0.034+$ & $0.118-$ & $\bm{0.027-}$ & $0.163+$ & $0.076+$ & $\bm{0.042+}$ & $0.095+$ & $0.061+$ & $\bm{0.019+}$ \\
    \end{tabular}
    }
    \vskip -0.1in
    
\end{table*}

\begin{table*}[h!]
    \caption{Average absolute estimation error on synthetic data, corresponding to domain adaptation scenario. Several state-of-the-art MPE algorithms DPL \citep{ivanov2020dedpul}, EN \citep{elkan2008learning}, KM \citep{ramaswamy2016mixture} and TIcE \citep{bekker2018estimating} are selected. The mean absolute error ($\avg[\abs{\widehat{\kappa}^* - \kappa^*}]$) is reported, and the smallest error among the original, regrouping and subsampling versions is bolded. $(+/-)$ denotes that on average the estimator produces positive$/$negative estimation bias ($\sgn[\avg(\widehat{\kappa}^* - \kappa^*)]$). 
    }
    \label{table:da_syn_err}
    \vskip 0.15in
    \centering
    
    \resizebox{\textwidth}{!}{
    \begin{tabular}{ c | c c c|c c c| c c c | c c c }
     $\kappa^*$ & DPL & ReDPL & SuDPL & EN & ReEN & SuEN & KM & ReKM & SuKM & TIcE & ReTIcE & SuTIcE \\ \hline
     $0.1$ & $0.089+$ & $0.075+$ & $\bm{0.014-}$ & $0.059+$ & $0.051+$ & $\bm{0.027-}$ & $0.102+$ & $0.091+$ & $\bm{0.013-}$ & $0.150+$ & $0.140+$ & $\bm{0.038+}$   \\ \hline 
     $0.25$ & $0.083+$ & $0.060+$ & $\bm{0.016-}$ & $0.037+$ & $\bm{0.016+}$ & $0.051-$ & $0.081+$ & $0.059+$ & $\bm{0.018-}$ & $0.137+$ & $0.108+$ & $\bm{0.040+}$  \\ \hline 
     $0.5$ & $0.053+$ & $0.028+$ & $\bm{0.024-}$ & $\bm{0.022-}$ & $0.057-$ & $0.077-$ & $0.052+$ & $0.037+$ & $\bm{0.020-}$ & $0.114+$ & $0.068+$ & $\bm{0.035+}$ \\ \hline
     $0.75$ & $\bm{0.016-}$ & $0.050-$ & $0.056-$ & $\bm{0.063-}$ & $0.118-$ & $0.114-$ & $\bm{0.018+}$ & $0.050-$ & $0.036-$ & $0.112+$ & $0.058+$ & $\bm{0.051+}$\\  \hline
     average & $0.060+$ & $0.053+$ & $\bm{0.028-}$ & $\bm{0.045-}$ & $0.061-$ & $0.067-$ & $0.063+$ & $0.059+$ & $\bm{0.022-}$ & $0.128+$ & $0.094+$ & $\bm{0.041+}$
    \end{tabular}
    }
    \vskip -0.1in

\end{table*}

\begin{table*}[h!]
    \caption{Average absolute estimation error on benchmark data, corresponding to domain adaptation scenario. Several state-of-the-art MPE algorithms DPL \citep{ivanov2020dedpul}, EN \citep{elkan2008learning}, KM \citep{ramaswamy2016mixture} and TIcE \citep{bekker2018estimating} are selected. The mean absolute error is reported, and the smallest error among the original, regrouping and subsampling versions is bolded. $+/-/\cdot$ denotes that on average the estimator produces positive$/$negative/no estimation bias.
    }
    \label{table:da_bench_err}
    \vskip 0.15in
    \centering
    \resizebox{\textwidth}{!}{
    \begin{tabular}{ c | c | c c c|c c c| c c c | c c c }
     Dataset & $\kappa^*$ & DPL & ReDPL & SuDPL & EN & ReEN & SuEN & KM & ReKM & SuKM & TIcE & ReTIcE & SuTIcE \\ \hline
     \multirow{4}{4em}{Mushroom} & $0.1$ & $0.075+$ & $0.069+$ & $\bm{0.053+}$ & $0.121+$ & $0.105+$ & $\bm{0.099+}$ & $0.061+$ & $0.054+$ & $\bm{0.039+}$ & $0.082+$ & $0.086+$ & $\bm{0.064+}$  \\ 
      & $0.25$ & $0.063+$ & $\bm{0.023+}$ & $0.040+$ & $0.139+$ & $\bm{0.101+}$ & $0.109+$ & $0.053+$ & $0.041+$ & $\bm{0.024+}$ & $0.057+$ & $0.034+$ & $\bm{0.024+}$ \\  
      & $0.5$ & $0.035+$ & $0.084-$ & $\bm{0.025+}$ & $0.132+$ & $\bm{0.074+}$ & $0.112+$ & $\bm{0.057-}$ & $0.189-$ & $0.063-$ & $\bm{0.026+}$ & $0.058-$ & $0.028-$ \\  
      & $0.75$ & $0.015-$ & $0.149-$ & $\bm{0.013-}$ & $0.097+$ & $\bm{0.033+}$ & $0.082+$ & $\bm{0.096-}$ & $0.253-$ & $0.108-$ & $0.076-$ & $0.134-$ & $\bm{0.027-}$ \\ \cline{2-14}
      & avg & $0.047+$ & $0.081-$ & $\bm{0.033+}$ & $0.122+$ & $\bm{0.078+}$ & $0.101+$ & $0.067-$ & $0.134-$ & $\bm{0.059-}$ & $0.060+$ & $0.078-$ & $\bm{0.036+}$\\ \hline
      
      \multirow{4}{4em}{Landsat}  & $0.1$ & $0.066+$ & $0.053+$ & $\bm{0.014+}$ & $0.168+$ & $0.139+$ & $\bm{0.115+}$ & $0.065+$ & $0.047+$ & $\bm{0.017+}$ & $0.070+$ & $0.074+$ & $\bm{0.031+}$  \\ 
      
      & $0.25$ & $0.060+$ & $0.010-$ & $\bm{0.005+}$ & $0.161+$ & $0.119+$ & $\bm{0.100+}$ & $0.053+$ & $0.027+$ & $\bm{0.007+}$ & $0.034+$ & $0.028+$ & $\bm{0.012-}$ \\  
      
      & $0.5$ & $0.037+$ & $0.034-$ & $\bm{0.010-}$ & $0.131+$ & $0.097+$ & $\bm{0.084+}$ & $0.037+$ & $0.018+$ & $\bm{0.016-}$ & $0.033+$ & $0.031-$ & $\bm{0.030-}$ \\  
      
      & $0.75$ & $\bm{0.022+}$ & $0.071-$ & $0.037-$ & $0.102+$ & $0.085+$ & $\bm{0.041+}$ & $0.028+$ & $\bm{0.023-}$ & $0.024-$ & $\bm{0.036-}$ & $0.042-$ & $0.067-$ \\ \cline{2-14}
      & avg & $0.046+$ & $0.042-$ & $\bm{0.017-}$ & $0.141+$ & $0.110+$ & $\bm{0.085+}$ & $0.046+$ & $0.029+$ & $\bm{0.016-}$ & $0.043+$ & $0.044-$ & $\bm{0.035-}$ \\ \hline
      
      \multirow{4}{4em}{Shuttle} & $0.1$ & $0.055+$ & $0.048+$ & $\bm{0.006+}$ & $0.105+$ & $0.096+$ & $\bm{0.053+}$ & $0.044+$ & $0.033+$ & $\bm{0.006-}$ & $0.083+$ & $0.072+$ & $\bm{0.023+}$  \\ 
      
      & $0.25$ & $0.045+$ & $0.010-$ & $\bm{0.007-}$ & $0.098+$ & $0.082+$ & $\bm{0.038+}$ & $0.027+$ & $0.018-$ & $\bm{0.016-}$ & $0.074+$ & $0.030+$ & $\bm{0.020+}$ \\  
      
      & $0.5$ & $0.025+$ & $0.102-$ & $\bm{0.016-}$ & $0.093+$ & $0.066+$ & $\bm{0.050+}$ & $\bm{0.017-}$ & $0.099-$ & $0.030-$ & $0.074+$ & $0.065-$ & $\bm{0.041+}$ \\  
      
      & $0.75$ & $\bm{0.022-}$ & $0.393-$ & $0.033-$ & $0.063+$ & $\bm{0.041+}$ & $0.044+$ & $\bm{0.058-}$ & $0.294-$ & $0.075-$ & $0.089+$ & $0.179-$ & $\bm{0.068+}$ \\ \cline{2-14}
      
      & avg & $0.037+$ & $0.138-$ & $\bm{0.015-}$ & $0.090+$ & $0.071+$ & $\bm{0.046+}$ & $0.036+$ & $0.111-$ & $\bm{0.032-}$ & $0.080+$ & $0.086-$ & $\bm{0.038+}$ \\ \hline
      
      \multirow{4}{4em}{MNIST17} & $0.1$ & $0.076+$ & $0.078+$ & $\bm{0.035+}$ & $0.184+$ & $0.157+$ & $\bm{0.116+}$ & $0.065+$ & $0.057+$ & $\bm{0.025+}$ & $0.101+$ & $0.093+$ & $\bm{0.052+}$  \\ 
      
      & $0.25$ & $0.058+$ & $0.047+$ & $\bm{0.008-}$ & $0.218+$ & $0.175+$ & $\bm{0.138+}$ & $0.053+$ & $0.017+$ & $\bm{0.009-}$ & $0.098+$ & $0.084+$ & $\bm{0.031+}$ \\  
      
      & $0.5$ & $0.032+$ & $0.047-$ & $\bm{0.022-}$ & $0.275+$ & $\bm{0.180+}$ & $0.191+$ & $0.029+$ & $0.030-$ & $\bm{0.017-}$ & $0.080+$ & $\bm{0.021+}$ & $0.047+$ \\  
      
      & $0.75$ & $\bm{0.023-}$ & $0.169-$ & $0.046-$ & $0.250+$ & $\bm{0.189+}$ & $0.217+$ & $\bm{0.016+}$ & $0.146-$ & $\bm{0.016-}$ & $0.081+$ & $0.094-$ & $\bm{0.060+}$ \\ \cline{2-14}
      
      & avg & $0.047+$ & $0.085-$ & $\bm{0.028-}$ & $0.231+$ & $0.175+$ & $\bm{0.166+}$ & $0.041+$ & $0.063-$ & $\bm{0.017-}$ & $0.090+$ & $0.073-$ & $\bm{0.048+}$ \\ \hline
      
      \multirow{1}{4em}{Overall}  & avg & $0.044+$ & $0.087-$ & $\bm{0.023-}$  & $0.146+$ & $0.109+$ & $\bm{0.099+}$ & $0.047+$ & $0.084-$ & $\bm{0.031-}$ & $0.068+$ & $0.070-$ & $\bm{0.039+}$ \\  \hline
    \end{tabular}
    }
    \vskip -0.1in

\end{table*}

\begin{table*}[h!]
    \caption{Average absolute estimation error on benchmark data, corresponding to the selected/reported at random scenario. Several state-of-the-art MPE algorithms DPL \citep{ivanov2020dedpul}, EN \citep{elkan2008learning}, KM \citep{ramaswamy2016mixture} and TIcE \citep{bekker2018estimating} are selected. The mean absolute error is reported, the smallest error among the original, regrouping and subsampling versions is bolded. $+/-/\cdot$ denotes that on average the estimator produces positive$/$negative/no estimation bias. 
    }
    \label{table:rar_bench_err}
    \vskip 0.15in
    \centering
    \resizebox{\textwidth}{!}{
    \begin{tabular}{ c | c | c c c|c c c| c c c | c c c }
     Dataset & $\kappa^*$ & DPL & ReDPL & SuDPL & EN & ReEN & SuEN & KM & ReKM & SuKM & TIcE & ReTIcE & SuTIcE \\ \hline
     \multirow{4}{4em}{Mushroom} & $0.1$ & $0.073+$ & $0.067+$ & $\bm{0.058+}$ & $0.119+$ & $0.105+$ & $\bm{0.099+}$ & $0.059+$ & $0.051+$ & $\bm{0.040+}$ & $0.084+$ & $0.082+$ & $\bm{0.057+}$  \\ 
      & $0.25$ & $0.060+$ & $0.023+$ & $\bm{0.005-}$ & $0.134+$ & $0.096+$ & $\bm{0.054+}$ & $0.054+$ & $0.034+$ & $\bm{0.010-}$ & $0.049+$ & $0.028+$ & $\bm{0.008+}$ \\  
      & $0.5$ & $0.032+$ & $0.087-$ & $\bm{0.018-}$ & $0.125+$ & $\bm{0.068+}$ & $0.076+$ & $\bm{0.073-}$ & $0.190-$ & $0.082-$ & $\bm{0.027+}$ & $0.057-$ & $0.052-$ \\  
      & $0.75$ & $\bm{0.023-}$ & $0.203-$ & $0.028-$ & $0.096+$ & $\bm{0.030+}$ & $0.066+$ & $\bm{0.100-}$ & $0.262-$ & $0.123-$ & $\bm{0.028-}$ & $0.096-$ & $0.058-$ \\ \cline{2-14}
      & avg & $0.047+$ & $0.095-$ & $\bm{0.027+}$ & $0.119+$ & $0.075+$ & $\bm{0.074+}$ & $0.072-$ & $0.134-$ & $\bm{0.064-}$ & $0.047+$ & $0.066-$ & $\bm{0.044-}$\\ \hline
      \multirow{4}{4em}{Landsat}  & $0.1$ & $0.066+$ & $0.053+$ & $\bm{0.034+}$ & $0.167+$ & $0.141+$ & $\bm{0.121+}$ & $0.066+$ & $0.049+$ & $\bm{0.028+}$ & $0.068+$ & $0.074+$ & $\bm{0.036+}$  \\ 
      & $0.25$ & $0.052+$ & $0.011-$ & $\bm{0.007\cdot}$ & $0.159+$ & $0.115+$ & $\bm{0.085+}$ & $0.061+$ & $0.035+$ & $\bm{0.011+}$ & $0.034+$ & $0.020+$ & $\bm{0.011-}$ \\  
      & $0.5$ & $0.043+$ & $0.065-$ & $\bm{0.011-}$ & $0.137+$ & $0.103+$ & $\bm{0.089+}$ & $0.033+$ & $0.020+$ & $\bm{0.016-}$ & $0.040+$ & $\bm{0.029-}$ & $0.048-$ \\  
      & $0.75$ & $0.032+$ & $0.097-$ & $\bm{0.023-}$ & $0.103+$ & $\bm{0.071+}$ & $\bm{0.071+}$ & $0.023+$ & $0.027-$ & $\bm{0.017\cdot}$ & $\bm{0.041-}$ & $0.087-$ & $0.068-$ \\ \cline{2-14}
      & avg & $0.048+$ & $0.057-$ & $\bm{0.019-}$ & $0.142+$ & $0.108+$ & $\bm{0.092+}$ & $0.046+$ & $0.033+$ & $\bm{0.018+}$ & $0.046+$ & $0.053-$ & $\bm{0.041-}$ \\ \hline
      \multirow{4}{4em}{Shuttle} & $0.1$ & $0.056+$ & $0.048+$ & $\bm{0.015+}$ & $0.112+$ & $0.097+$ & $\bm{0.060+}$ & $0.049+$ & $0.035+$ & $\bm{0.016-}$ & $0.080+$ & $0.066+$ & $\bm{0.032+}$  \\ 
      & $0.25$ & $0.044+$ & $0.018-$ & $\bm{0.007-}$ & $0.106+$ & $0.083+$ & $\bm{0.045+}$ & $0.029+$ & $0.020-$ & $\bm{0.018-}$ & $0.077+$ & $0.024+$ & $\bm{0.037+}$ \\  
      & $0.5$ & $0.026+$ & $0.162-$ & $\bm{0.008-}$ & $0.098+$ & $0.081+$ & $\bm{0.068+}$ & $\bm{0.016-}$ & $0.165-$ & $0.051-$ & $0.083+$ & $0.099-$ & $\bm{0.052+}$ \\  
      & $0.75$ & $\bm{0.012+}$ & $0.348-$ & $0.023-$ & $0.065+$ & $0.029+$ & $\bm{0.050+}$ & $\bm{0.072-}$ & $0.296-$ & $0.089-$ & $0.075+$ & $0.169-$ & $\bm{0.078+}$ \\ \cline{2-14}
      & avg & $0.035+$ & $0.144-$ & $\bm{0.013-}$ & $0.095+$ & $0.073+$ & $\bm{0.056+}$ & $\bm{0.042+}$ & $0.129-$ & $0.044-$ & $0.079+$ & $0.089-$ & $\bm{0.050+}$ \\ \hline
      \multirow{4}{4em}{MNIST17} & $0.1$ & $0.077+$ & $0.078+$ & $\bm{0.055+}$ & $0.183+$ & $0.157+$ & $\bm{0.134+}$ & $0.060+$ & $0.052+$ & $\bm{0.046+}$ & $0.096+$ & $0.083+$ & $\bm{0.076+}$  \\ 
      & $0.25$ & $0.063+$ & $0.052+$ & $\bm{0.005+}$ & $0.229+$ & $0.179+$ & $\bm{0.113+}$ & $0.060+$ & $0.027+$ & $\bm{0.009-}$ & $0.099+$ & $0.087+$ & $\bm{0.034+}$ \\  
      & $0.5$ & $0.035+$ & $0.042-$ & $\bm{0.014+}$ & $0.300+$ & $\bm{0.198+}$ & $0.208+$ & $0.022+$ & $0.024-$ & $\bm{0.011-}$ & $0.099+$ & $\bm{0.028+}$ & $0.053+$ \\  
      & $0.75$ & $\bm{0.014-}$ & $0.178-$ & $0.033-$ & $0.248+$ & $\bm{0.161+}$ & $0.200+$ & $\bm{0.008+}$ & $0.153-$ & $0.021-$ & $0.090+$ & $0.095-$ & $\bm{0.066+}$ \\ \cline{2-14}
      & avg & $0.047+$ & $0.088-$ & $\bm{0.027+}$ & $0.240+$ & $0.173+$ & $\bm{0.164+}$ & $0.038+$ & $0.064-$ & $\bm{0.022+}$ & $0.096+$ & $0.073+$ & $\bm{0.057+}$ \\ \hline
      \multirow{1}{4em}{Overall}  & avg & $0.044+$ & $0.096-$ & $\bm{0.022-}$  & $0.149+$ & $0.107+$ & $\bm{0.097+}$ & $0.050+$ & $0.090-$ & $\bm{0.037-}$ & $0.067+$ & $0.070+$ & $\bm{0.048+}$ \\  \hline
    \end{tabular}
    }
    \vskip -0.1in
\end{table*}

\newpage
\section{When Irreducibility Holds}
\label{app:exp_irred}
In theory, when irreducibility holds, baseline MPE methods shall be asymptotically unbiased estimators of the mixture proportion $\kappa^*$, regrouping may introduce negative bias, subsampling should not introduce bias.
Here we run some synthetically generated experiments (in a controlled setting) to verify the theoretical claim.

We specify the distributions $H$, $G$ and $F$  as: 
\begin{align*}
    H & \sim \mathcal{N} (\mu_1 = 0, \sigma_1 = 1) \\
    G & \sim  \mathcal{N} (\mu_2 = 2, \sigma_2 = 1) \\
    F & = (1-\kappa^*) G + \kappa^* H.
\end{align*}
where $G$ is irreducible w.r.t. $H$. We draw $m=500, n=1500$ instances from both $H$ and $F$. The true mixture proportion $\kappa^* = \kappa(F|H)$ is varied in $\left\{0.1, 0.25, 0.5, 0.75 \right\}$.

We draw $2000$ labeled instances from $F$. A $1$ hidden layer neural network with $16$ neurons was trained to predict $P(Y=1|X=x)$. The acceptance function $\alpha(x)$ used in Subsampling MPE is chosen to be 
\begin{equation*}
    \alpha(x) = \begin{cases}
        \widehat{P}(Y=1|X=x), & x \in A, \\
        1, & \text{o.w.},
    \end{cases}
\end{equation*}
where $A = \{x: \widehat{P}(Y=1|X=x) > 0.6 \} $. As for ReMPE, $10 \%$ of the sample from $F$ is chosen to be regrouped to the sample from $H$, as suggested by \citet{yao2022rethinking}. The above procedure was repeated $10$ times with different random seeds. Results where shown in Table \ref{table:irred_syn_avg}, where SuMPE never performs the worst 
\footnote{ 
In some cases, subsampling slightly increases the bias compared to original version, this is partly due to $P(Y|X)$ being estimated.
} and ReMPE may introduce extremely high bias.

\begin{table*}[h!]
    \caption{Bias of the estimators on synthetic data. Several state-of-the-art MPE algorithms DPL, EN, KM and TIcE are selected. For each of them, we compute the bias ($\avg[\widehat{\kappa}^* - \kappa^*]$) of the original, regrouping and subsampling version. The biggest absolute bias among the original, regrouping and subsampling versions is in bold.
    }
    \label{table:irred_syn_avg}
    \vskip 0.15in
    \centering
    
    \resizebox{\textwidth}{!}{
    \begin{tabular}{ c | c c c|c c c| c c c | c c c }
     $\kappa^*$ & DPL & ReDPL & SuDPL & EN & ReEN & SuEN & KM & ReKM & SuKM & TIcE & ReTIcE & SuTIcE \\ \hline
     $0.1$ & $\bm{+0.029}$ & $+0.021$ & $+0.010$ & $\bm{+0.020}$ & $+0.005$ & $+0.008$ & $\bm{+0.020}$ & $+0.007$ & $+0.006$ & ${+0.085}$ & $\bm{+0.089}$ & ${+0.063}$ \\ \hline 
     $0.25$ & $+0.012$ & $\bm{-0.062}$ & $-0.017$ & $-0.023$ &   $\bm{-0.080}$ & $-0.043$ & $-0.016$ & $\bm{-0.074}$ & $-0.031$ &   $\bm{+0.049}$  &  $+0.010$  & $+0.024$ \\ \hline 
     $0.5$ & $+0.038$ &  $\bm{-0.128}$ & $-0.003$ & $-0.045$ &  $\bm{-0.156}$ & ${-0.067}$ & $-0.001$ & $\bm{-0.133}$ & $-0.018$ &    $\bm{+0.059}$ &  $-0.048$ & $+0.032$ \\ \hline 
     $0.75$ & $+0.012$ & $\bm{-0.282}$ & $-0.015$ & ${-0.071}$ & $\bm{-0.278}$ &  ${-0.088}$ & $0.000$ &  $\bm{-0.278}$ &  $-0.012$ &   ${+0.089}$  & $\bm{-0.199}$  &  $+0.037$
    \end{tabular}
    }
    \vskip -0.1in

\end{table*}

\end{document}